\newtcolorbox[list inside=prompt,auto counter,number within=section]{prompt}[1][]{
    colbacktitle=black!60,
    coltitle=white,
    fontupper=\footnotesize,
    boxsep=5pt,
    left=0pt,
    right=0pt,
    top=0pt,
    bottom=0pt,
    boxrule=1pt,
    #1,
}
\newtcolorbox[auto counter,number within=chapter]{prompt2}[1][]{
  enhanced,
  breakable,
  fontupper=\footnotesize,
  fonttitle=\scshape,
  title={Definition \thetcbcounter},
  #1,
}
\newmdenv[
  backgroundcolor=black!05,
  linecolor=quoteborder,
  skipabove=1em,
  skipbelow=1em,
  leftline=true,
  topline=false,
  bottomline=false,
  rightline=false,
  linecolor=black!40,
  linewidth=4pt,
  font=\small,
  leftmargin=0cm
]{prompt_env}
\title{SMART-RAG: Selection using Determinantal Matrices for Augmented Retrieval}
\author{
 \textbf{Jiatao Li\textsuperscript{1,2}},
 \textbf{Xinyu Hu\textsuperscript{1}},
 \textbf{Xiaojun Wan\textsuperscript{1}},
\\
 \textsuperscript{1}Wangxuan Institute of Computer Technology, Peking University
, \\
 \textsuperscript{2}Information Management Department, Peking University
 \\
 \texttt{{leejames@stu.pku.edu.cn}},
 \texttt{\{huxinyu,wanxiaojun\}@pku.edu.cn}
}
\newtheorem{theorem}{Theorem}
\newtheorem{corollary}{Corollary}[theorem]
\begin{document}
\maketitle
\begin{abstract}
Retrieval-Augmented Generation (RAG) has greatly improved large language models (LLMs) by enabling them to generate accurate, contextually grounded responses through the integration of external information. However, conventional RAG approaches, which prioritize top-ranked documents based solely on query-context relevance, often introduce redundancy and conflicting information. This issue is particularly evident in unsupervised retrieval settings, where there are no mechanisms to effectively mitigate these problems, leading to suboptimal context selection.
To address this, we propose \textbf{S}election using \textbf{M}atrices for \textbf{A}ugmented \textbf{R}e\textbf{T}rieval (\textbf{SMART}) in question answering tasks , a fully unsupervised and training-free framework designed to optimize context selection in RAG. SMART leverages Determinantal Point Processes (DPPs) to simultaneously model relevance, diversity and conflict, ensuring the selection of potentially high-quality contexts. Experimental results across multiple datasets demonstrate that SMART significantly enhances QA performance and surpasses previous unsupervised context selection methods, showing a promising strategy for RAG.
\end{abstract}

\section{Introduction}
Recent advancements in large language models (LLMs) have significantly boosted performance in various natural language processing (NLP) tasks, especially in question answering (QA)~\cite{NEURIPS2020_1457c0d6}. Despite their fluent and coherent generation, LLMs often face issues of hallucination and lack of factual grounding~\cite{maynez-etal-2020-faithfulness,zhou-etal-2021-detecting}, resulting in inconsistent or unreliable responses—particularly problematic in domains where factual accuracy is crucial, such as law, medicine, and education.

To address these issues, Retrieval-Augmented Generation (RAG) have emerged as a powerful solution. By integrating external sources of factual evidence, RAG enhance the reliability and consistency of QA models~\cite{khattab2023demonstratesearchpredict,izacard2022atlas}. However, selecting relevant, diverse, and non-conflicting contexts remains a major challenge. Supervised RAG methods require large amounts of labeled data, which is impractical in low-resource domains~\cite{wang2023learning,dong2024don}. Even unsupervised methods often rely on pre-training or fine-tuning, which incurs significant computational costs.

One of the key challenges in RAG is selecting contexts that are not only relevant but also diverse and free of contradictions. While retrieved contexts are typically aligned with the query, they can introduce redundancy or conflicting information, especially when drawn from large datasets. For instance, one context might claim that ``Treatment X has been shown to reduce symptoms by 50\% in trials,'' while another might state, ``Treatment X is ineffective and provides no significant benefits.'' Such contradictions can confuse the QA system, leading to unreliable or incoherent responses, which is particularly problematic in high-stakes domains like education and healthcare, where misinformation can have serious consequences.

This problem occurs because retrieval methods tend to prioritize relevance at the expense of diversity, leading to the over-selection of similar contexts. Furthermore, when multiple contexts present conflicting perspectives on the same information, the coherence of the generated answers suffers. A crucial factor for producing accurate, coherent answers lies in striking a balance between relevance and diversity while minimizing contradictions.

Therefore, we introduce \textbf{S}election using \textbf{M}atrices for \textbf{A}ugmented \textbf{R}e\textbf{T}rieval in Question Answering (\textbf{SMART}), a fully unsupervised, train-free framework that directly tackles the challenges of context selection in RAG. The key innovation of SMART lies in its use of Determinantal Point Processes (DPPs) to model relevance, diversity, and conflict relations among retrieved contexts. This approach enables SMART to select highly valuable contexts while effectively addressing issues of redundancy and conflict, which often hinder the performance of traditional RAG methods. 

SMART integrates conflict modeling into context selection to enhance the coherence and factual accuracy of generated answers. By leveraging Determinantal Point Processes (DPPs), SMART balances relevance and diversity while minimizing redundancy, preventing the selection of conflicting or repetitive contexts~\cite{kulesza2012determinantal}. Cosine similarity is used to assess relevance, and Natural Language Inference (NLI) identifies and penalizes contradictions, ensuring that selected contexts are both diverse and consistent. As a fully unsupervised, train-free method, SMART efficiently adapts to various tasks without requiring large-scale labeled data or computationally intensive retraining.

Our contributions are as follows:
\begin{enumerate}
    \item We introduce a novel extension of DPP that integrates conflict modeling into context selection for RAG.
    \item We present SMART, a fully unsupervised and training-free framework that surpasses previous unsupervised RAG methods.
    \item We demonstrate the effectiveness of our SMART in improving context selection for QA by balancing relevance, diversity, and conflict, resulting in better performance across multiple datasets.
\end{enumerate}

\section{Methodology}

\begin{figure*}[t]
\centering
\includegraphics[width=1\textwidth]{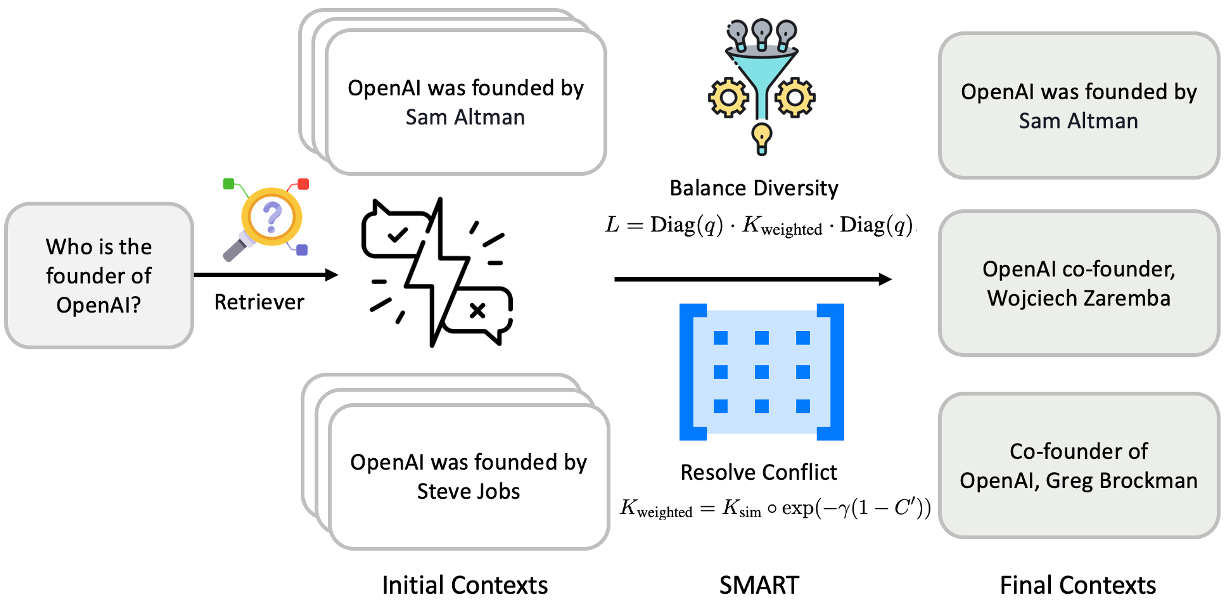} 
\caption{SMART pipeline for context selection in question answering. Given an initial query, the retriever identifies relevant documents, which are then split into sentences. A pre-ranking step is applied to narrow down the top candidate sentences. The pipeline involves two key stages: (1) Data Processing, where Query-Context (QC) and Context-Context (CC) relations are constructed, and (2) Context Selection, leveraging Determinantal Point Processes (DPP) to balance relevance and diversity while resolving conflicts between contexts. The result is a set of diverse, non-redundant, and coherent final contexts.}
\label{pipeline}
\end{figure*}

In this section, we present the SMART framework, which employs DPP to select relevant, diverse, and conflict-free contexts. By explicitly modeling conflict relations between contexts, SMART improves coherence and factual consistency in question-answering (QA) tasks, ensuring more reliable and accurate answers.

\subsection{Relation Modeling}

In SMART, each context \( c \) is segmented into individual sentences \( \{s_1, s_2, \dots, s_n\} \), after which the framework models three essential relationships: \textit{textual similarity}, \textit{conflict relations}, and \textit{query-context relevance}. 

\paragraph{Textual Similarity} To assess content overlap, we compute pairwise cosine similarity between dense vector representations of contexts. The cosine similarity between two contexts \( c_i \) and \( c_j \), denoted as \( s(c_i, c_j) \), is calculated as:
\[
s(c_i, c_j) = \frac{v_{c_i} \cdot v_{c_j}}{\|v_{c_i}\| \|v_{c_j}\|}
\]
where \( v_{c_i} \) and \( v_{c_j} \) are the embeddings of contexts \( c_i \) and \( c_j \). This similarity score contributes to the similarity matrix \( K_{\text{sim}} \), which captures the degree of textual similarity between contexts.

\paragraph{Conflict Relations}
\label{para:conflict-relations}
In addition to modeling relevance and diversity, SMART also accounts for conflict relations between contexts to avoid selecting contradictory information. For each context pair \( c_i \) and \( c_j \), we employ a Natural Language Inference (NLI) model to compute the probability of conflict. The conflict relation \( c(c_i, c_j) \) is symmetrized by averaging the directional probabilities \( P(c_i \rightarrow c_j) \) and \( P(c_j \rightarrow c_i) \):
$$
c(c_i, c_j) = \frac{1}{2} \left( P(c_i \rightarrow c_j) + P(c_j \rightarrow c_i) \right)
$$
This results in a conflict matrix \( C \), which penalizes the selection of contradictory contexts, thereby ensuring that the final set is coherent and free from conflicts. By incorporating conflict modeling, SMART enhances the factual consistency of the selected contexts, going beyond standard relevance-based methods.

\paragraph{Query-Context Relevance} In addition to modeling relationships between contexts, SMART calculates the relevance of each context to the query. Using cosine similarity between the query \( Q \) and each context \( c \), the relevance score \( r(Q, c) \) is given by:
\[
r(Q, c) = \frac{v_Q \cdot v_c}{\|v_Q\| \|v_c\|}
\]
where \( v_Q \) and \( v_c \) represent the embeddings of the query and the context, respectively. This relevance score ensures that the selected contexts are closely aligned with the query, leading to higher-quality answers.

\subsection{DPP-Based Context Selection}
Based on the context relations constructed as above, we modify and improve the Determinantal Point Process (DPP) approach to select an optimal subset of contexts. It ensures an effectively balanced selection in terms of \textit{relevance, diversity}, and \textit{coherence}. We will introduce the main method of our SMART, with the related theoretical derivations and proofs presented in Appendix~\ref{sec:proofs}.

\paragraph{Kernel Matrix Construction} 
The kernel matrix \( L \) ensures that selected contexts are relevant, diverse, and conflict-free. It is constructed as follows:
$$ L = \text{Diag}(r) \cdot K_{\text{weighted}} \cdot \text{Diag}(r) $$

where \( r \) represents the relevance scores between the query and each context, and \( K_{\text{weighted}} \) is a \textit{conflict-adjusted similarity matrix} that balances textual similarity and conflict penalties. The symmetrized conflict matrix \( C \) is computed as described in Section~\ref{para:conflict-relations}, and the weighted similarity matrix is initialized as:
\[
K_{\text{weighted}} = K_{\text{sim}} \circ \exp(-\gamma (1 - C))
\]
where \( K_{\text{sim}} \) represents the cosine similarity between contexts. The decay function \( \exp(-\gamma (1 - C)) \) adjusts the similarity based on conflict, with \( \gamma \geq 0 \) controlling how strongly conflict impacts similarity. This penalizes contexts with higher conflict scores as proved in Section~\ref{sec:c_reduce}.

\paragraph{Subset Selection via DPP}
After constructing the conflict-aware kernel matrix \( L \), DPP selects a subset \( Y_g \subseteq Y \) of contexts that optimally balances relevance, diversity, and coherence. The probability of selecting a subset \( Y_g \) is given by:

\[
P(Y_g) = \frac{\prod_{i \in Y_g} q_i^2 \cdot \det(K_{\text{weighted}, Y_g})}{\det(L + I)},
\]

where \( K_{\text{weighted}, Y_g} \) is the submatrix of \( K_{\text{weighted}} \) corresponding to the selected subset \( Y_g \). This ensures that selected contexts are both relevant and non-conflicting, mitigating the risk of incoherent or contradictory answers.

\paragraph{Efficient Subset Selection via Greedy MAP Inference}

To efficiently select an optimal subset of contexts, we use a \textit{greedy Maximum A Posteriori (MAP)} inference algorithm, iteratively selecting the context \( c_j \) that maximizes the marginal gain in the determinant:
$$ c_j = \arg\max_{i \in Y \setminus Y_g} \left( \log \det(L_{Y_g \cup \{i\}}) - \log \det(L_{Y_g}) \right) $$
This ensures that each selected context enhances relevance and diversity while avoiding redundancy and contradictions.

\paragraph{Balancing Relevance and Diversity}

To adjust the trade-off between relevance and diversity, we introduce a hyperparameter \( \beta \), governing the selection process:
\begin{align*}
\log \det(L_{Y_g}) = \beta \cdot \sum_{i \in Y_g} \log(q_i^2) + & \\
(1 - \beta) \cdot \log \det(K_{\text{weighted}, Y_g})
\end{align*}
where \( \beta \in [0, 1] \) controls the emphasis on relevance versus diversity. Higher \( \beta \) values favor relevance, while lower values prioritize diversity and conflict minimization, ensuring that selected contexts are both coherent and non-redundant.

By modeling conflict relations and combining them with relevance and diversity, SMART delivers a robust, conflict-free context selection process that significantly improves the quality and coherence of generated answers in QA systems.

\section{Experiments}

\subsection{Experimental Setup}

We evaluate SMART on three diverse knowledge-intensive tasks using few-shot in-context learning~\cite{radford2019language,NEURIPS2020_1457c0d6,ram2023context}. Following prior work~\cite{trivedi2022interleaving}, we limit the experiments to the first 500 samples from each dataset to manage computational costs. Since test sets are unavailable for datasets from the KILT benchmark~\cite{Petroni2020KILTAB} (e.g., HotpotQA, FEVER), we report results on the development sets. The tasks include open-domain question answering, multi-hop reasoning, and fact verification, evaluated on established datasets such as NaturalQuestions (NQ), TriviaQA (TQA), HotpotQA, FEVER, and FOOL ME TWICE (FM2).  
An overview of the datasets, evaluation metrics, and experimental settings is provided in Table~\ref{tab:setting}.

\begin{table}[ht]
\centering
\resizebox{\columnwidth}{!}{%
\begin{tabular}{lccc}
\toprule
\textbf{Settings} & \textbf{Task} & \textbf{\#ICL} & \textbf{Metrics} \\
\midrule
\textbf{NQ} & ODQA & 5-shot & EM \\
\textbf{TQA} & ODQA & 5-shot & EM \\
\textbf{HotpotQA} & Multihop QA & 5-shot & F$_1$ \\
\textbf{FEVER} & Fact Verification & 5-shot & EM \\
\textbf{FM2} & Fact Verification & 5-shot & EM \\
\bottomrule
\end{tabular}%
}
\caption{Dataset statistics and experimental settings for NQ, TQA, HotpotQA, FEVER, and FM2 tasks.}
\label{tab:setting}
\end{table}

\subsection{Tasks and Datasets}
\label{sec:dataset_details}

\paragraph{Open-Domain Question Answering (QA)}  
For open-domain QA, we use the \textbf{NaturalQuestions (NQ)}~\cite{kwiatkowski2019natural} and \textbf{TriviaQA (TQA)}~\cite{joshi-etal-2017-triviaqa} datasets. NQ provides a set of questions \(q\) paired with short answers \(o\), and we use the processed version from~\citet{lee-etal-2019-latent}, focusing on instances where answers are five tokens or fewer. TriviaQA (TQA) consists of questions \(q\) with corresponding answers \(o\), extracted from supporting Wikipedia documents \(P\). We evaluate model performance on both datasets using the Exact Match (EM) metric, following the evaluation framework in~\citet{NEURIPS2020_6b493230}.

\paragraph{Multi-Hop Question Answering}  
For multi-hop QA, we adopt the \textbf{HotpotQA} dataset~\cite{yang2018hotpotqadatasetdiverseexplainable}, which requires reasoning over multiple passages \(P\) to answer each question \(q\). HotpotQA includes 113K question-answer pairs, many of which demand abstractive reasoning rather than direct extraction from supporting documents \(P\). As the answers often require deeper reasoning and synthesis, we use the unigram F$_1$ score for evaluation, consistent with~\citet{yang2018hotpotqadatasetdiverseexplainable}.

\paragraph{Fact Verification}  
To evaluate the system's ability to verify factual claims, we use the \textbf{FEVER} dataset~\cite{thorne2018feverlargescaledatasetfact}, a core part of the KILT benchmark~\cite{petroni2021kiltbenchmarkknowledgeintensive}. Each sample consists of a claim \(q\) that either supports or refutes information sourced from Wikipedia. Claims are labeled as "SUPPORTS" if they align with the facts and "REFUTES" if they contradict the facts. Model performance is assessed using accuracy, in line with previous work~\cite{thorne2018feverlargescaledatasetfact}. Additionally, we test on the \textbf{FOOL ME TWICE (FM2)} dataset~\cite{eisenschlos2021fooltwiceentailmentwikipedia}, which presents challenging claims designed to deceive players into selecting false facts. This dataset is particularly useful for evaluating the model's robustness in identifying subtle factual inconsistencies.

\subsection{Baseline Methods}
To evaluate the performance of SMART, we compare it against several well-established unsupervised baseline methods, which include relevance-focused and diversity-aware approaches. \textbf{BGE}~\cite{bge_embedding} selects the most relevant contexts based on the query-context relevance scores using the \textbf{bge-large-en-v1.5} model, focusing purely on relevance without considering diversity or conflicts. \textbf{Affinity Propagation}~\cite{wang2008adaptive} selects exemplar contexts from clusters to ensure diversity, while \textbf{Agglomerative Clustering}~\cite{7942580} hierarchically groups similar contexts and selects representatives from different clusters. \textbf{Greedy Selection} simply iteratively chooses the most dissimilar contexts to maximize diversity, without considering relevance. \textbf{Maximal Marginal Relevance (MMR)}~\cite{carbonell1998use} balances relevance and diversity by selecting contexts that are both relevant and dissimilar to previously chosen ones. \textbf{Non-negative Matrix Factorization (NMF)}~\cite{lee2000algorithms} groups contexts into topics and selects representatives from each to cover diverse aspects. \textbf{Spectral Clustering}~\cite{article} organizes contexts based on similarity and selects representatives to maintain diversity. \textbf{TextRank}~\cite{mihalcea2004textrank} is a graph-based method that ranks contexts by their centrality within a similarity graph, focusing on relevance, while \textbf{LexRank}~\cite{Erkan2004LexRankGL} also ranks contexts based on centrality but emphasizes representativeness of the dataset. For further details on each method, refer to Appendix~\ref{sec:baselines}.

\subsection{Implementation Details}
We evaluate the Llama3-8b-instruct model using the December 2018 Wikipedia dump as the retrieval corpus in a 5-shot in-context learning setup. Initially, we use Contriever~\cite{Izacard2021UnsupervisedDI} to retrieve the top 50 documents per query. We then apply BGE for pre-ranking, selecting the top 30 sentences with the highest BGE query-context scores. Finally, the SMART method and other context selection baselines are used to select the final set of sentences from this pre-ranked pool for input. Table~\ref{tab:optimal_params} shows the optimal \(\beta\) and \(\gamma\) values for each dataset used in our experiments. Additional details on decoding strategies, prompts, few-shot examples, context segmentation, and embedding models are provided in Appendix~\ref{sec:implement-details}.

\begin{table}[ht]
\centering
\resizebox{0.48\textwidth}{!}{%
\begin{tabular}{l|c|c|c|c|c}
\toprule
\textbf{Parameter} & \textbf{NQ} & \textbf{TQA} & \textbf{HotpotQA} & \textbf{FEVER} & \textbf{FM2} \\
\midrule
\textbf{Beta ($\beta$)}  & 0.8 & 0.9 & 0.9 & 0.9 & 0.8 \\
\textbf{Gamma ($\gamma$)} & 0.8 & 0.3 & 0.2 & 0.2 & 0.7 \\
\bottomrule
\end{tabular}%
}
\caption{Optimal hyperparameters (\(\beta\) and \(\gamma\)) used for each dataset.}
\label{tab:optimal_params}
\end{table}

\subsection{Experimental Results}
Table~\ref{tab:context-selection-contriever} presents the performance of SMART alongside baseline methods across multiple datasets, using Exact Match (EM) for NQ, TQA, FEVER, FM2, and F1 for HotpotQA.

\begin{table*}[ht]
\vspace{-1mm}
\centering
\resizebox{1\textwidth}{!}{
    \begin{tabular}{l|ccccc}
        \toprule
        \textbf{Method} & \textbf{\textsc{NQ} (EM)} & \textbf{\textsc{TQA} (EM)} & \textbf{\textsc{HotpotQA} (F1)} & \textbf{\textsc{FEVER} (EM)} & \textbf{\textsc{FM2} (EM)} \\
        \midrule
        \textsc{BGE}      & 25.1 & 48.9 & 38.8 & 91.1 & 72.0 \\
        \textsc{Contriever}      & 21.6 & 36.6 & 32.4 & 83.8 & 71.1 \\
        \textsc{AgglomerativeClustering}      & 21.0 & 40.8 & 33.9 & 86.7 & 67.3 \\
        \textsc{Greedy}                   & 21.2 & 33.5 & 29.5 & 78.4 & 65.5 \\
        \textsc{MMR}                      & 20.4 & 36.6 & 27.5 & 78.2 & 66.5 \\
        \textsc{NMF}                      & 21.6 & 40.6 & 32.5 & 81.8 & 72.1 \\
        \textsc{SpectralClustering}       & 22.4 & 41.8 & 36.4 & 89.7 & 72.3 \\
        \textsc{TextRank}                 & 23.4 & 47.3 & 33.4 & 86.3 & 68.7 \\
        \textsc{LexRank}                 & 23.2 & 47.7 & 33.4 & 86.5 & 69.1 \\
        \textsc{Random}                   & 21.6 & 38.0 & 30.4 & 84.2 & 67.3 \\
        \midrule
        \textsc{\textbf{SMART(Ours)}}                  & \textbf{25.9} & \textbf{49.3} & \textbf{40.8} & \textbf{93.5} & \textbf{75.6} \\
        \bottomrule
    \end{tabular}
}
\caption{Performance comparison of various context selection methods across datasets, using the primary evaluation metric mentioned in Table~\ref{tab:setting}.}
\vspace{-1mm}
\label{tab:context-selection-contriever}
\end{table*}

SMART consistently outperforms baseline methods across all datasets. It excels particularly in \textsc{FEVER} and \textsc{FM2}, where conflict resolution is critical, achieving EM scores of 93.5 and 75.6, respectively. SMART also demonstrates superior performance in \textsc{HotpotQA} (F1: 40.8), showcasing its effectiveness in multi-hop reasoning tasks. Across other datasets such as \textsc{NQ} and \textsc{TQA}, SMART maintains its advantage, highlighting its robustness and scalability in context selection for question answering.

\section{Analyses and Discussions}
\subsection{Ablation Experiments}

To evaluate the contributions of key components in the SMART framework, we conducted a series of ablation experiments by systematically modifying specific elements of the model. These experiments assess the impact of relevance and conflict modeling on overall performance across multiple datasets. The ablations focus on four key variations: (1) removing conflict resolution while retaining query-context relevance, (2) removing both relevance and conflict modeling, (3) using only BGE relevance scores, and (4) using the raw top 5 contexts retrieved by Contriever~\cite{Izacard2021UnsupervisedDI}.

\begin{table*}[ht]
\vspace{-1mm}
\centering
\resizebox{1\textwidth}{!}{
    \begin{tabular}{l|ccccc}
        \toprule
        \textbf{Method} & \textbf{\textsc{NQ} (EM)} & \textbf{\textsc{TQA} (EM)} & \textbf{\textsc{HotpotQA} (F1)} & \textbf{\textsc{FEVER} (Acc)} & \textbf{\textsc{FM2} (EM)} \\
        \midrule
        \textsc{\textbf{SMART (Full)}}                  & \textbf{25.9} & 49.3 & \textbf{40.8} & \textbf{93.5} & \textbf{75.6} \\
        \textsc{\textbf{SMART w/o Conflict}}            & 25.1 & \textbf{49.9} & 40.5 & 92.3 & 74.3 \\
        \textsc{\textbf{SMART w/o Relevance + Conflict}} & 23.2 & 37.8 & 31.3 & 78.4 & 66.3 \\
        \textsc{\textbf{SMART w/o Diversity + Conflict}}            & 25.1 & 48.9 & 38.8 & 91.1 & 72.0 \\
        \textsc{\textbf{SMART w/o Diversity + Conflict + Relevance}}               & 21.6 & 36.6 & 32.4 & 83.8 & 71.1 \\
        \bottomrule
    \end{tabular}
}
\caption{Ablation study comparing the performance of SMART with different settings for relevance and conflict modeling across datasets.}
\vspace{-1mm}
\label{tab:context-selection-ablation}
\end{table*}

\paragraph{SMART (Full)}  
The full SMART model combines both relevance and conflict modeling, constructing the kernel matrix \(L\) as:
$$ L = \text{Diag}(r) \cdot K_{\text{weighted}} \cdot \text{Diag}(r) $$
This approach effectively balances relevance, diversity, and conflict resolution, resulting in superior performance across all datasets. For instance, in \textsc{FEVER}, the model achieves an Accuracy of 93.5, outperforming other settings by a clear margin. Similarly, in \textsc{FM2}, SMART reaches an EM score of 75.6, highlighting the significant impact of conflict management on fact-based QA tasks. Across datasets like \textsc{HotpotQA}, where multi-hop reasoning is critical, the model also excels with an F1 score of 40.8, showcasing its ability to handle complex queries through diverse and coherent context selection.

\paragraph{SMART w/o Conflict}  
In this ablation, we removed conflict modeling while retaining query-context relevance and textual similarity. The kernel matrix \(L\) is simplified as:
$$ L = \text{Diag}(r) \cdot K_{\text{sim}} \cdot \text{Diag}(r) $$
While this variant continues to perform well, it struggles with selecting conflicting contexts, which impacts coherence and factual consistency. This is particularly evident in fact-based tasks like \textsc{FEVER} and \textsc{FM2}. For example, in \textsc{FEVER}, the Accuracy decreases by 1.2 points (93.5 to 92.3), and in \textsc{FM2}, the EM score drops by 1.3 points (75.6 to 74.3). These results underscore the critical role of conflict resolution in maintaining reliable, non-contradictory context selection.

\paragraph{SMART w/o Relevance + Conflict}  
In this ablation, both query-context relevance and conflict modeling are removed, leaving SMART to rely solely on DPP to maintain diversity using the similarity matrix:
$$ L = K_{\text{sim}} $$
While DPP promotes diverse context selection, this approach lacks mechanisms for ensuring the selected contexts are aligned with the query or free from contradictions. As a result, performance suffers significantly across all datasets, particularly in tasks requiring multi-hop reasoning and factual accuracy, such as \textsc{HotpotQA} and \textsc{FM2}. For instance, in \textsc{FM2}, the EM score drops by 9.3 points (from 75.6 to 66.3), underscoring the critical role of both relevance and conflict modeling for effective context selection.

\paragraph{SMART w/o Diversity + Conflict}  
In this configuration, context selection is based solely on the BGE query-context relevance score, without incorporating similarity or conflict modeling. While this approach performs reasonably well, it lacks the ability to manage conflicting contexts or ensure diversity. As a result, its performance is lower compared to the full SMART model. For instance, in \textsc{FM2}, the EM score is 72.0, falling short of the 75.6 achieved by the full model. Similarly, in \textsc{FEVER}, the Accuracy drops to 91.1, underscoring the importance of conflict resolution in tasks where factual consistency is critical. Although BGE Relevance Only performs better than more basic baselines, it remains limited by its inability to address conflicts and promote diversity.

\paragraph{SMART w/o Diversity + Conflict + Relevance}  
This baseline directly selects the top 5 sentences from documents retrieved by Contriever without any additional modeling. As expected, this approach yields the weakest performance, with notable declines across all datasets. The model is unable to handle conflicting or redundant contexts, particularly in fact-based tasks like \textsc{FEVER}, where Accuracy drops significantly to 83.8.

\paragraph{Key Findings}  
The results from the ablation experiments (Table~\ref{tab:context-selection-ablation}) clearly highlight the crucial role of both relevance and conflict modeling in SMART's performance. Removing conflict resolution leads to a significant decline in factual consistency, especially in datasets like \textsc{FM2} and \textsc{FEVER}, where maintaining coherence and accuracy is essential. The most pronounced performance drops are observed when both relevance and conflict modeling are excluded, underscoring the importance of integrating these components for high-quality, diverse, and coherent context selection. These findings reaffirm that managing relevance and conflict is essential for achieving optimal performance in fact-based QA tasks.

\subsection{Effects of Key Hyperparameters}

While SMART requires no additional training, tuning two key hyperparameters—$\beta$ and $\gamma$—is essential for optimizing performance. $\beta$ balances relevance and diversity, while $\gamma$ manages conflict resolution across contexts.

\paragraph{Relevance-Diversity Trade-off ($\beta$)}

$\beta$ controls the balance between relevance and diversity in context selection. As shown in Figure~\ref{fig:beta_gamma_show}, the best results are typically obtained with $\beta$ values between 0.7 and 0.8, depending on the dataset. For example, in FM2, $\beta = 0.8$ strikes an ideal balance, avoiding redundancy while maintaining relevance. In NQ, $\beta = 0.7$ provides sufficient diversity to cover multiple facets of the query. Setting $\beta$ too high leads to diminished performance due to redundant context selection.

\paragraph{Conflict Resolution ($\gamma$)}

$\gamma$ determines how strongly conflicting contexts are penalized. Its effects are especially pronounced in fact-based tasks like FEVER and FM2, where consistency is critical. For FEVER, $\gamma = 0.2$ provides the best balance, while FM2 benefits from a stricter penalty with $\gamma = 0.7$. In NQ, a higher $\gamma = 0.8$ helps eliminate contradictory information. However, setting $\gamma$ too high risks excluding useful, slightly conflicting contexts, reducing diversity and overall performance.

\begin{figure*}[htbp]
  \centering
  \includegraphics[width=\textwidth]{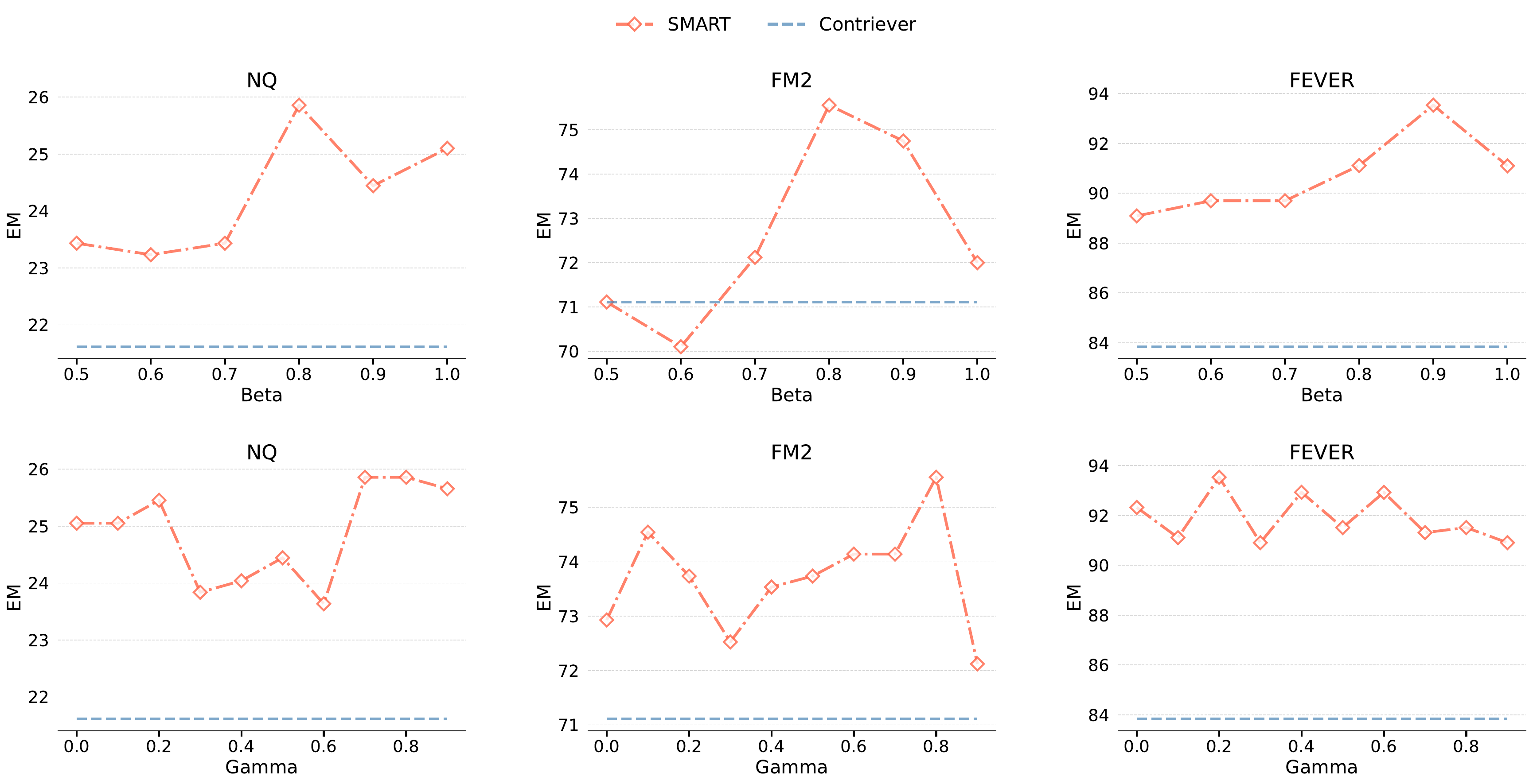}
  \caption{The different performance of SMART across datasets, with each hyperparameter (\(\beta\) or \(\gamma\)) tuned independently while keeping the other fixed at its optimal value (as reported in Table~\ref{tab:optimal_params}). The dashed lines represent the results of directly using top 5 sentences from documents retrieved by contriever.}
  \label{fig:beta_gamma_show}
\end{figure*}

\section{Related Work}

\subsection{Unsupervised Context Selection Methods}

Unsupervised context selection is crucial in low-data scenarios. Techniques like Affinity Propagation~\cite{wang2008adaptive} and TextRank~\cite{mihalcea2004textrank} focus on either relevance or diversity but often fail to balance both. Greedy Selection maximizes diversity by selecting the most dissimilar contexts, ignoring relevance. In contrast, Maximal Marginal Relevance (MMR)~\cite{carbonell1998use} balances relevance and diversity, choosing contexts that are both relevant and distinct. Non-negative Matrix Factorization (NMF)~\cite{lee2000algorithms} promotes topic diversity but does not directly prioritize relevance in its selections.

Recent unsupervised approaches such as REALM~\cite{guu2020realm} and REPLUG~\cite{shi2023replug} have reduced reliance on labeled data by involving some level of training, though they still require pre-training or fine-tuning. In contrast, SMART offers a fully train-free solution, integrating Determinantal Point Processes (DPPs) to jointly model relevance, diversity, and conflict. By addressing both query-context relationships and potential conflicts, SMART ensures a more balanced and effective selection of contexts, providing significant improvements in question-answering tasks without the need for extensive pre-training.

\subsection{Retrieval-Augmented Generation and Context Optimization}

Retrieval-Augmented Generation (RAG) improves language models by incorporating external information, but it often faces challenges such as noise, irrelevant details, and redundancy in the retrieved texts, which can degrade performance~\cite{rag,realm,searchain,wang2023self}. Most existing RAG context selection methods are supervised, relying heavily on labeled data, which limits their scalability in low-resource environments where such data is scarce~\cite{dong2024don}.

While unsupervised RAG approaches reduce the need for labeled data, they typically still require pre-training or fine-tuning and often struggle to balance relevance and diversity in context selection. SMART overcomes these limitations by introducing a fully unsupervised, train-free framework that leverages Determinantal Point Processes (DPPs). This ensures a balanced selection of contexts that minimizes redundancy, resolves conflicts, and optimizes relevance and diversity—filling a critical gap in existing RAG methodologies.

In augmented generation tasks, where additional contexts support the generation process, balancing relevance, diversity, and coherence is essential~\cite{lewis2020retrieval,guu2020realm}. Although retrieving longer passages may offer more information, they often introduce irrelevant or conflicting details that harm output quality~\cite{shi2023large}. SMART addresses this by selecting contexts that are both relevant and diverse while remaining free from contradictions, enhancing both factual accuracy and coherence in generated text.

Existing retrieval optimization techniques often focus on reranking based on relevance or utility~\cite{nogueira2020passage}, but neglect the need for diversity and conflict resolution. SMART improves upon these approaches by integrating conflict resolution, ensuring retrieved content is relevant, non-redundant, and consistent. This conflict-aware approach strengthens the performance of generation tasks, delivering more robust, coherent, and factually accurate outputs across diverse applications.

\section{Conclusion}

In this paper, we presented SMART, a novel unsupervised framework for enhancing context selection in retrieval-augmented generation (RAG). By incorporating conflict modeling into Determinantal Point Processes (DPPs), SMART addresses a key challenge in context selection: balancing relevance, diversity, and conflict resolution. Unlike traditional methods, SMART explicitly models and mitigates conflicts between retrieved contexts, ensuring that selected contexts are both coherent and factually consistent. This conflict-aware approach operates without the need for pre-training or fine-tuning, integrating query-context relevance and textual similarity to deliver high-quality outputs.

Extensive experiments across various datasets demonstrated that SMART outperforms existing methods, particularly in tasks such as fact verification and multi-hop reasoning. Its training-free nature makes it highly scalable for real-world applications, including virtual assistants, legal systems, and educational tools. By offering high-quality, conflict-free context selection, SMART has the potential to drive significant advancements in retrieval-augmented question answering and generation systems. 
\section*{Limitations}
A notable limitation of SMART is the time cost associated with computing conflict relations using the Natural Language Inference (NLI) model. While the DPP-based selection is efficient, the process of identifying contradictions between contexts through NLI introduces additional computational overhead, particularly when dealing with large-scale datasets. Moreover, the current sentence-level segmentation approach may not fully capture inter-sentence dependencies or broader document coherence, potentially missing key contextual relationships. Future work could focus on optimizing the NLI conflict detection process and exploring more sophisticated segmentation strategies to enhance both efficiency and context selection quality.
\section*{Acknowledgments}
This work was supported by National Key R\&D Program of China (2021YFF0901502), National Science Foundation of China (No. 62161160339), State Key Laboratory of Media Convergence Production Technology and Systems, and Key Laboratory of Science, Technology and Standard in Press Industry (Key Laboratory of Intelligent Press Media Technology). We appreciate the anonymous reviewers for their helpful comments. Xiaojun Wan is the corresponding author.

\bibliography{custom}

\begin{thebibliography}{43}
\providecommand{\natexlab}[1]{#1}

\bibitem[{AI@Meta(2024)}]{llama3modelcard}
AI@Meta. 2024.
\newblock \href {https://github.com/meta-llama/llama3/blob/main/MODEL_CARD.md} {Llama 3 model card}.

\bibitem[{Bird and Loper(2004)}]{bird-loper-2004-nltk}
Steven Bird and Edward Loper. 2004.
\newblock \href {https://aclanthology.org/P04-3031} {{NLTK}: The natural language toolkit}.
\newblock In \emph{Proceedings of the {ACL} Interactive Poster and Demonstration Sessions}, pages 214--217, Barcelona, Spain. Association for Computational Linguistics.

\bibitem[{Brown et~al.(2020)Brown, Mann, Ryder, Subbiah, Kaplan, Dhariwal, Neelakantan, Shyam, Sastry, Askell, Agarwal, Herbert-Voss, Krueger, Henighan, Child, Ramesh, Ziegler, Wu, Winter, Hesse, Chen, Sigler, Litwin, Gray, Chess, Clark, Berner, McCandlish, Radford, Sutskever, and Amodei}]{NEURIPS2020_1457c0d6}
Tom Brown, Benjamin Mann, Nick Ryder, Melanie Subbiah, Jared~D Kaplan, Prafulla Dhariwal, Arvind Neelakantan, Pranav Shyam, Girish Sastry, Amanda Askell, Sandhini Agarwal, Ariel Herbert-Voss, Gretchen Krueger, Tom Henighan, Rewon Child, Aditya Ramesh, Daniel Ziegler, Jeffrey Wu, Clemens Winter, Chris Hesse, Mark Chen, Eric Sigler, Mateusz Litwin, Scott Gray, Benjamin Chess, Jack Clark, Christopher Berner, Sam McCandlish, Alec Radford, Ilya Sutskever, and Dario Amodei. 2020.
\newblock \href {https://proceedings.neurips.cc/paper_files/paper/2020/file/1457c0d6bfcb4967418bfb8ac142f64a-Paper.pdf} {Language models are few-shot learners}.
\newblock In \emph{Advances in Neural Information Processing Systems}, volume~33, pages 1877--1901. Curran Associates, Inc.

\bibitem[{Carbonell and Goldstein(1998)}]{carbonell1998use}
Jaime Carbonell and Jade Goldstein. 1998.
\newblock The use of mmr, diversity-based reranking for reordering documents and producing summaries.
\newblock In \emph{Proceedings of the 21st annual international ACM SIGIR conference on Research and development in information retrieval}, pages 335--336.

\bibitem[{Dong et~al.(2024)Dong, Fatemi, Perozzi, Yang, and Tsitsulin}]{dong2024don}
Jialin Dong, Bahare Fatemi, Bryan Perozzi, Lin~F Yang, and Anton Tsitsulin. 2024.
\newblock Don't forget to connect! improving rag with graph-based reranking.
\newblock \emph{arXiv preprint arXiv:2405.18414}.

\bibitem[{Eisenschlos et~al.(2021)Eisenschlos, Dhingra, Bulian, Börschinger, and Boyd-Graber}]{eisenschlos2021fooltwiceentailmentwikipedia}
Julian~Martin Eisenschlos, Bhuwan Dhingra, Jannis Bulian, Benjamin Börschinger, and Jordan Boyd-Graber. 2021.
\newblock \href {https://arxiv.org/abs/2104.04725} {Fool me twice: Entailment from wikipedia gamification}.
\newblock \emph{Preprint}, arXiv:2104.04725.

\bibitem[{Erkan and Radev(2004)}]{Erkan2004LexRankGL}
G{\"u}nes Erkan and Dragomir~R. Radev. 2004.
\newblock \href {https://api.semanticscholar.org/CorpusID:506350} {Lexrank: Graph-based lexical centrality as salience in text summarization}.
\newblock \emph{ArXiv}, abs/1109.2128.

\bibitem[{Gupta et~al.(2019)Gupta, Kaur, Bajaj, and Khanna}]{article}
Anand Gupta, Manpreet Kaur, Ahsaas Bajaj, and Ansh Khanna. 2019.
\newblock \href {https://doi.org/10.5815/ijisa.2019.04.04} {Entailment and spectral clustering based single and multiple document summarization}.
\newblock \emph{International Journal of Intelligent Systems and Applications}, 11:39--51.

\bibitem[{Guu et~al.(2020{\natexlab{a}})Guu, Lee, Tung, Pasupat, and Chang}]{guu2020realm}
Kelvin Guu, Kenton Lee, Zora Tung, Panupong Pasupat, and Ming-Wei Chang. 2020{\natexlab{a}}.
\newblock \href {https://arxiv.org/abs/2002.08909} {Realm: Retrieval-augmented language model pre-training}.
\newblock \emph{Preprint}, arXiv:2002.08909.

\bibitem[{Guu et~al.(2020{\natexlab{b}})Guu, Lee, Tung, Pasupat, and Chang}]{realm}
Kelvin Guu, Kenton Lee, Zora Tung, Panupong Pasupat, and Mingwei Chang. 2020{\natexlab{b}}.
\newblock Retrieval augmented language model pre-training.
\newblock In \emph{International conference on machine learning}, pages 3929--3938. PMLR.

\bibitem[{Izacard et~al.(2021)Izacard, Caron, Hosseini, Riedel, Bojanowski, Joulin, and Grave}]{Izacard2021UnsupervisedDI}
Gautier Izacard, Mathilde Caron, Lucas Hosseini, Sebastian Riedel, Piotr Bojanowski, Armand Joulin, and Edouard Grave. 2021.
\newblock \href {https://api.semanticscholar.org/CorpusID:249097975} {Unsupervised dense information retrieval with contrastive learning}.
\newblock \emph{Trans. Mach. Learn. Res.}, 2022.

\bibitem[{Izacard et~al.(2022)Izacard, Lewis, Lomeli, Hosseini, Petroni, Schick, Dwivedi-Yu, Joulin, Riedel, and Grave}]{izacard2022atlas}
Gautier Izacard, Patrick Lewis, Maria Lomeli, Lucas Hosseini, Fabio Petroni, Timo Schick, Jane Dwivedi-Yu, Armand Joulin, Sebastian Riedel, and Edouard Grave. 2022.
\newblock \href {https://arxiv.org/abs/2208.03299} {Atlas: Few-shot learning with retrieval augmented language models}.
\newblock \emph{Preprint}, arXiv:2208.03299.

\bibitem[{Joshi et~al.(2017)Joshi, Choi, Weld, and Zettlemoyer}]{joshi-etal-2017-triviaqa}
Mandar Joshi, Eunsol Choi, Daniel Weld, and Luke Zettlemoyer. 2017.
\newblock \href {https://doi.org/10.18653/v1/P17-1147} {{T}rivia{QA}: A large scale distantly supervised challenge dataset for reading comprehension}.
\newblock In \emph{Proceedings of the 55th Annual Meeting of the Association for Computational Linguistics (Volume 1: Long Papers)}, pages 1601--1611, Vancouver, Canada. Association for Computational Linguistics.

\bibitem[{Khattab et~al.(2023)Khattab, Santhanam, Li, Hall, Liang, Potts, and Zaharia}]{khattab2023demonstratesearchpredict}
Omar Khattab, Keshav Santhanam, Xiang~Lisa Li, David Hall, Percy Liang, Christopher Potts, and Matei Zaharia. 2023.
\newblock \href {https://arxiv.org/abs/2212.14024} {Demonstrate-search-predict: Composing retrieval and language models for knowledge-intensive nlp}.
\newblock \emph{Preprint}, arXiv:2212.14024.

\bibitem[{Kulesza et~al.(2012)Kulesza, Taskar et~al.}]{kulesza2012determinantal}
Alex Kulesza, Ben Taskar, et~al. 2012.
\newblock Determinantal point processes for machine learning.
\newblock \emph{Foundations and Trends{\textregistered} in Machine Learning}, 5(2--3):123--286.

\bibitem[{Kwiatkowski et~al.(2019)Kwiatkowski, Palomaki, Redfield, Collins, Parikh, Alberti, Epstein, Polosukhin, Devlin, Lee et~al.}]{kwiatkowski2019natural}
Tom Kwiatkowski, Jennimaria Palomaki, Olivia Redfield, Michael Collins, Ankur Parikh, Chris Alberti, Danielle Epstein, Illia Polosukhin, Jacob Devlin, Kenton Lee, et~al. 2019.
\newblock Natural questions: a benchmark for question answering research.
\newblock \emph{Transactions of the Association for Computational Linguistics}, 7:453--466.

\bibitem[{Lee and Seung(2000)}]{lee2000algorithms}
Daniel Lee and H~Sebastian Seung. 2000.
\newblock Algorithms for non-negative matrix factorization.
\newblock \emph{Advances in neural information processing systems}, 13.

\bibitem[{Lee et~al.(2019)Lee, Chang, and Toutanova}]{lee-etal-2019-latent}
Kenton Lee, Ming-Wei Chang, and Kristina Toutanova. 2019.
\newblock \href {https://doi.org/10.18653/v1/P19-1612} {Latent retrieval for weakly supervised open domain question answering}.
\newblock In \emph{Proceedings of the 57th Annual Meeting of the Association for Computational Linguistics}, pages 6086--6096, Florence, Italy. Association for Computational Linguistics.

\bibitem[{Lewis et~al.(2020{\natexlab{a}})Lewis, Perez, Piktus, Petroni, Karpukhin, Goyal, K\"{u}ttler, Lewis, Yih, Rockt\"{a}schel, Riedel, and Kiela}]{NEURIPS2020_6b493230}
Patrick Lewis, Ethan Perez, Aleksandra Piktus, Fabio Petroni, Vladimir Karpukhin, Naman Goyal, Heinrich K\"{u}ttler, Mike Lewis, Wen-tau Yih, Tim Rockt\"{a}schel, Sebastian Riedel, and Douwe Kiela. 2020{\natexlab{a}}.
\newblock \href {https://proceedings.neurips.cc/paper_files/paper/2020/file/6b493230205f780e1bc26945df7481e5-Paper.pdf} {Retrieval-augmented generation for knowledge-intensive nlp tasks}.
\newblock In \emph{Advances in Neural Information Processing Systems}, volume~33, pages 9459--9474. Curran Associates, Inc.

\bibitem[{Lewis et~al.(2020{\natexlab{b}})Lewis, Perez, Piktus, Petroni, Karpukhin, Goyal, K{\"u}ttler, Lewis, Yih, Rockt{\"a}schel et~al.}]{lewis2020retrieval}
Patrick Lewis, Ethan Perez, Aleksandra Piktus, Fabio Petroni, Vladimir Karpukhin, Naman Goyal, Heinrich K{\"u}ttler, Mike Lewis, Wen-tau Yih, Tim Rockt{\"a}schel, et~al. 2020{\natexlab{b}}.
\newblock Retrieval-augmented generation for knowledge-intensive nlp tasks.
\newblock \emph{Advances in Neural Information Processing Systems}, 33:9459--9474.

\bibitem[{Lewis et~al.(2020{\natexlab{c}})Lewis, Perez, Piktus et~al.}]{rag}
Patrick Lewis, Ethan Perez, Aleksandra Piktus, et~al. 2020{\natexlab{c}}.
\newblock Retrieval-augmented generation for knowledge-intensive nlp tasks.
\newblock \emph{Advances in Neural Information Processing Systems}, 33:9459--9474.

\bibitem[{Maynez et~al.(2020)Maynez, Narayan, Bohnet, and McDonald}]{maynez-etal-2020-faithfulness}
Joshua Maynez, Shashi Narayan, Bernd Bohnet, and Ryan McDonald. 2020.
\newblock \href {https://doi.org/10.18653/v1/2020.acl-main.173} {On faithfulness and factuality in abstractive summarization}.
\newblock In \emph{Proceedings of the 58th Annual Meeting of the Association for Computational Linguistics}, pages 1906--1919, Online. Association for Computational Linguistics.

\bibitem[{Mihalcea and Tarau(2004)}]{mihalcea2004textrank}
Rada Mihalcea and Paul Tarau. 2004.
\newblock Textrank: Bringing order into text.
\newblock In \emph{Proceedings of the 2004 conference on empirical methods in natural language processing}, pages 404--411.

\bibitem[{Nogueira and Cho(2020)}]{nogueira2020passage}
Rodrigo Nogueira and Kyunghyun Cho. 2020.
\newblock \href {https://arxiv.org/pdf/1901.04085} {Passage re-ranking with bert}.
\newblock \emph{arXiv preprint arXiv:1901.04085}.

\bibitem[{Perron(1907)}]{Perron1907ZurTD}
Oskar Perron. 1907.
\newblock \href {https://api.semanticscholar.org/CorpusID:123460172} {Zur theorie der matrices}.
\newblock \emph{Mathematische Annalen}, 64:248--263.

\bibitem[{Petroni et~al.(2021)Petroni, Piktus, Fan, Lewis, Yazdani, Cao, Thorne, Jernite, Karpukhin, Maillard, Plachouras, Rocktäschel, and Riedel}]{petroni2021kiltbenchmarkknowledgeintensive}
Fabio Petroni, Aleksandra Piktus, Angela Fan, Patrick Lewis, Majid Yazdani, Nicola~De Cao, James Thorne, Yacine Jernite, Vladimir Karpukhin, Jean Maillard, Vassilis Plachouras, Tim Rocktäschel, and Sebastian Riedel. 2021.
\newblock \href {https://arxiv.org/abs/2009.02252} {Kilt: a benchmark for knowledge intensive language tasks}.
\newblock \emph{Preprint}, arXiv:2009.02252.

\bibitem[{Petroni et~al.(2020)Petroni, Piktus, Fan, Lewis, Yazdani, Cao, Thorne, Jernite, Plachouras, Rocktaschel, and Riedel}]{Petroni2020KILTAB}
Fabio Petroni, Aleksandra Piktus, Angela Fan, Patrick Lewis, Majid Yazdani, Nicola~De Cao, James Thorne, Yacine Jernite, Vassilis Plachouras, Tim Rocktaschel, and Sebastian Riedel. 2020.
\newblock \href {https://api.semanticscholar.org/CorpusID:221507798} {Kilt: a benchmark for knowledge intensive language tasks}.
\newblock In \emph{North American Chapter of the Association for Computational Linguistics}.

\bibitem[{Radford et~al.(2019)Radford, Wu, Child, Luan, Amodei, and Sutskever}]{radford2019language}
Alec Radford, Jeff Wu, Rewon Child, David Luan, Dario Amodei, and Ilya Sutskever. 2019.
\newblock Language models are unsupervised multitask learners.

\bibitem[{Ram et~al.(2023)Ram, Levine, Dalmedigos, Muhlgay, Shashua, Leyton-Brown, and Shoham}]{ram2023context}
Ori Ram, Yoav Levine, Itay Dalmedigos, Dor Muhlgay, Amnon Shashua, Kevin Leyton-Brown, and Yoav Shoham. 2023.
\newblock In-context retrieval-augmented language models.
\newblock \emph{Transactions of the Association for Computational Linguistics}, 11:1316--1331.

\bibitem[{Schur(1911)}]{Schur+1911+1+28}
J.~Schur. 1911.
\newblock \href {https://doi.org/doi:10.1515/crll.1911.140.1} {Bemerkungen zur theorie der beschränkten bilinearformen mit unendlich vielen veränderlichen.}
\newblock \emph{Journal für die reine und angewandte Mathematik}, 1911(140):1--28.

\bibitem[{Sharaff et~al.(2016)Sharaff, Shrawgi, Arora, and Verma}]{7942580}
Aakanksha Sharaff, Hari Shrawgi, Priyank Arora, and Anshul Verma. 2016.
\newblock \href {https://doi.org/10.1109/ICAECCT.2016.7942580} {Document summarization by agglomerative nested clustering approach}.
\newblock In \emph{2016 IEEE International Conference on Advances in Electronics, Communication and Computer Technology (ICAECCT)}, pages 187--191.

\bibitem[{Shi et~al.(2023{\natexlab{a}})Shi, Chen, Misra, Scales, Dohan, Chi, Schärli, and Zhou}]{shi2023large}
Freda Shi, Xinyun Chen, Kanishka Misra, Nathan Scales, David Dohan, Ed~Chi, Nathanael Schärli, and Denny Zhou. 2023{\natexlab{a}}.
\newblock \href {https://arxiv.org/pdf/2302.00093} {Large language models can be easily distracted by irrelevant context}.
\newblock \emph{arXiv preprint arXiv:2302.00093}.

\bibitem[{Shi et~al.(2023{\natexlab{b}})Shi, Min, Yasunaga, Seo, James, Lewis, Zettlemoyer, and tau Yih}]{shi2023replug}
Weijia Shi, Sewon Min, Michihiro Yasunaga, Minjoon Seo, Rich James, Mike Lewis, Luke Zettlemoyer, and Wen tau Yih. 2023{\natexlab{b}}.
\newblock \href {https://arxiv.org/pdf/2301.12652} {Replug: Retrieval-augmented black-box language models}.
\newblock \emph{arXiv preprint arXiv:2301.12652}.

\bibitem[{Sileo(2023)}]{sileo2023tasksource}
Damien Sileo. 2023.
\newblock \href {https://arxiv.org/abs/2301.05948} {tasksource: Structured dataset preprocessing annotations for frictionless extreme multi-task learning and evaluation}.
\newblock \emph{arXiv preprint arXiv:2301.05948}.

\bibitem[{Thorne et~al.(2018)Thorne, Vlachos, Christodoulopoulos, and Mittal}]{thorne2018feverlargescaledatasetfact}
James Thorne, Andreas Vlachos, Christos Christodoulopoulos, and Arpit Mittal. 2018.
\newblock \href {https://arxiv.org/abs/1803.05355} {Fever: a large-scale dataset for fact extraction and verification}.
\newblock \emph{Preprint}, arXiv:1803.05355.

\bibitem[{Trivedi et~al.(2022)Trivedi, Balasubramanian, Khot, and Sabharwal}]{trivedi2022interleaving}
Harsh Trivedi, Niranjan Balasubramanian, Tushar Khot, and Ashish Sabharwal. 2022.
\newblock Interleaving retrieval with chain-of-thought reasoning for knowledge-intensive multi-step questions.
\newblock \emph{arXiv preprint arXiv:2212.10509}.

\bibitem[{Wang et~al.(2008)Wang, Zhang, Li, Zhang, and Guo}]{wang2008adaptive}
Kaijun Wang, Junying Zhang, Dan Li, Xinna Zhang, and Tao Guo. 2008.
\newblock Adaptive affinity propagation clustering.
\newblock \emph{arXiv preprint arXiv:0805.1096}.

\bibitem[{Wang et~al.(2023{\natexlab{a}})Wang, Li, Sun, and Liu}]{wang2023self}
Yile Wang, Peng Li, Maosong Sun, and Yang Liu. 2023{\natexlab{a}}.
\newblock Self-knowledge guided retrieval augmentation for large language models.
\newblock \emph{arXiv preprint arXiv:2310.05002}.

\bibitem[{Wang et~al.(2023{\natexlab{b}})Wang, Araki, Jiang, Parvez, and Neubig}]{wang2023learning}
Zhiruo Wang, Jun Araki, Zhengbao Jiang, Md~Rizwan Parvez, and Graham Neubig. 2023{\natexlab{b}}.
\newblock Learning to filter context for retrieval-augmented generation.
\newblock \emph{arXiv preprint arXiv:2311.08377}.

\bibitem[{Xiao et~al.(2023)Xiao, Liu, Zhang, and Muennighoff}]{bge_embedding}
Shitao Xiao, Zheng Liu, Peitian Zhang, and Niklas Muennighoff. 2023.
\newblock \href {https://arxiv.org/abs/2309.07597} {C-pack: Packaged resources to advance general chinese embedding}.
\newblock \emph{Preprint}, arXiv:2309.07597.

\bibitem[{Xu et~al.(2023)Xu, Pang, Shen, Cheng, and Chua}]{searchain}
Shicheng Xu, Liang Pang, Huawei Shen, Xueqi Cheng, and Tat-seng Chua. 2023.
\newblock Search-in-the-chain: Towards the accurate, credible and traceable content generation for complex knowledge-intensive tasks.
\newblock \emph{arXiv preprint arXiv:2304.14732}.

\bibitem[{Yang et~al.(2018)Yang, Qi, Zhang, Bengio, Cohen, Salakhutdinov, and Manning}]{yang2018hotpotqadatasetdiverseexplainable}
Zhilin Yang, Peng Qi, Saizheng Zhang, Yoshua Bengio, William~W. Cohen, Ruslan Salakhutdinov, and Christopher~D. Manning. 2018.
\newblock \href {https://arxiv.org/abs/1809.09600} {Hotpotqa: A dataset for diverse, explainable multi-hop question answering}.
\newblock \emph{Preprint}, arXiv:1809.09600.

\bibitem[{Zhou et~al.(2021)Zhou, Neubig, Gu, Diab, Guzm{\'a}n, Zettlemoyer, and Ghazvininejad}]{zhou-etal-2021-detecting}
Chunting Zhou, Graham Neubig, Jiatao Gu, Mona Diab, Francisco Guzm{\'a}n, Luke Zettlemoyer, and Marjan Ghazvininejad. 2021.
\newblock \href {https://doi.org/10.18653/v1/2021.findings-acl.120} {Detecting hallucinated content in conditional neural sequence generation}.
\newblock In \emph{Findings of the Association for Computational Linguistics: ACL-IJCNLP 2021}, pages 1393--1404, Online. Association for Computational Linguistics.

\end{thebibliography}

\clearpage
\appendix
\clearpage
\section{Detailed Baseline Methods}
\label{sec:baselines}
In this section, we provide a comprehensive explanation of the baseline methods used to evaluate SMART's performance in retrieval-augmented generation (RAG) for question answering (QA). Since SMART operates in a fully unsupervised and train-free manner, it offers a fair and meaningful comparison with other unsupervised baselines.

Below are the unsupervised methods used in our evaluation, including both traditional relevance-focused approaches and diversity-aware clustering techniques.

\paragraph{BGE} This baseline leverages the \texttt{bge-large-en-v1.5} model, which is a state-of-the-art query-context (QC) relevance calculator. The top contexts are selected based solely on their relevance scores. While this method focuses purely on maximizing relevance, it does not incorporate mechanisms for promoting diversity or addressing potential conflicts between the contexts. Due to the strength of the BGE model in capturing semantic relevance, this baseline represents a strong competitor, particularly in scenarios where relevance is the primary concern.

\paragraph{Affinity Propagation}
Affinity Propagation is a clustering-based algorithm that identifies exemplars, which are contexts that represent the most typical members of each cluster. It does this by exchanging messages between data points until a high-quality set of exemplars emerges. Once the clusters are formed, a representative context is selected from each cluster, ensuring diversity. However, this method does not explicitly prioritize relevance, leading to possible trade-offs in the final selection process.

\paragraph{Agglomerative Clustering}
Agglomerative Clustering is a hierarchical method that begins by treating each context as an individual cluster. The algorithm then iteratively merges pairs of clusters based on their similarity, creating a hierarchical tree (dendrogram). Once the clustering process is complete, contexts are selected from different clusters to promote diversity. This method ensures that the final selection covers a wide range of information, although it does not explicitly prioritize relevance to the query.

\paragraph{Greedy Selection}
Greedy Selection is an iterative approach that selects contexts one by one, focusing exclusively on maximizing diversity. Each new context is chosen to be as dissimilar as possible from the already selected contexts, ensuring that the final set covers a broad range of information. However, this method sacrifices relevance to the query, as it does not incorporate any mechanisms for optimizing relevance scores.

\paragraph{Maximal Marginal Relevance (MMR)}
MMR is a well-known method for balancing relevance and diversity. It iteratively selects contexts that are highly relevant to the query while penalizing those that are too similar to the already selected contexts. This ensures that the selected contexts are both relevant and diverse, making MMR a more balanced approach than methods focused solely on relevance or diversity.

\paragraph{Non-negative Matrix Factorization (NMF)}
NMF is a factorization technique that decomposes the query-context similarity matrix into a set of latent components, where each component represents a topic. Contexts are then grouped into these topics, and a representative context is selected from each topic to ensure topic-wise coverage. While NMF promotes diversity by capturing different aspects of the data, it does not handle conflicts or directly optimize relevance for each selected context.

\paragraph{Spectral Clustering}
Spectral Clustering uses the eigenvalues of the similarity matrix to perform dimensionality reduction before clustering. By capturing the underlying structure of the context space, Spectral Clustering groups similar contexts and selects representative contexts from each cluster. This method promotes diversity, ensuring that a range of perspectives is covered, though it does not explicitly model relevance or conflict.

\paragraph{TextRank}
TextRank is a graph-based algorithm used for text ranking, similar to Google's PageRank. In the context of RAG tasks, TextRank represents contexts as nodes in a graph, with edges weighted by the cosine similarity between contexts. The algorithm assigns higher ranks to more central contexts (those connected to other relevant contexts), making it an effective method for identifying key contexts based on their relevance within the set. However, it does not address conflict or redundancy directly.

\paragraph{LexRank}
LexRank is another graph-based method similar to TextRank but focuses more on centrality within the context graph. LexRank ranks contexts based on their importance, determined by their connection to other contexts in the graph. It ensures that the selected contexts are representative of the whole dataset, promoting diversity. However, like TextRank, LexRank does not handle conflicts or redundancy.

\paragraph{Random Selection}
This baseline randomly selects contexts from the pool, without any consideration for relevance, diversity, or conflict. Although simplistic, it serves as a lower-bound comparison, illustrating the importance of more sophisticated context selection methods in RAG tasks.

\section{Implementation Details}
\label{sec:implement-details}
To ensure reproducibility, we provide a detailed overview of the key implementation settings and parameters used in our experiments.

\subsection{Data Retrieval and Preprocessing}
\paragraph{Retrieval and Context Segmentation}
We use the Contriever model to retrieve the top 50 documents per query. After retrieval, documents are split into individual sentences using the Natural Language Toolkit (NLTK)~\cite{bird-loper-2004-nltk}. This sentence-level segmentation allows for more precise analysis. The sentences are then re-ranked using the \texttt{bge-large-en-v1.5} embedding model~\cite{bge_embedding}, ensuring more relevant and focused context selection. After pre-ranking, SMART and other baseline methods are used to select the final sentences from the pre-ranked pool, optimizing the balance between relevance, diversity, and conflict resolution.  

\paragraph{BGE preranking}
In the SMART framework, relations between contexts and queries are established at the sentence level. However, generating relation matrices for all sentences is inefficient, as many sentences are of low quality or noisy. Moreover, processing all relations—particularly the context-context conflict relation, with its complexity of \(O(n^2)\)—becomes computationally prohibitive as \(n\) increases. To address this, we employ a pre-ranking step using the BGE ranker, selecting the top 30 sentences. Our method is then applied to choose the top 5 sentences from this set. All baseline methods also utilize this same pre-ranking step to ensure a fair comparison. For initial document retrieval, we use the Contriever model, which retrieves the top 50 documents. These documents are then split into sentences, and the BGE ranker is applied at the sentence level.

\subsection{Models}
\paragraph{QA models}
For all QA tasks, we use the Llama3-8b-instruct model~\cite{llama3modelcard}. Greedy decoding is applied with a temperature setting of \( \text{temp} = 0 \), ensuring deterministic outputs by always selecting the most probable token at each step.
\paragraph{Embedding Model}
For dense vector representations of contexts and queries, we utilize the \texttt{bge-large-en-v1.5} embedding model~\cite{bge_embedding}, which captures semantic similarities across textual inputs.
\paragraph{Natural Language Inference (NLI) Model}
To detect and resolve conflicts between contexts, we employ the DeBERTa-v3 model~\cite{sileo2023tasksource}. This model identifies contradictions and entailments between text pairs, crucial for maintaining coherence in selected contexts.

\subsection{Hyperparameters}
The number of selected sentences, \( \text{top}_k \), is fixed at 5. We perform a grid search over a subset of the development set to determine the optimal hyperparameters, with the results detailed in Table~\ref{tab:optimal_params}. The beta parameter \( \beta \), which controls the trade-off between relevance and diversity, is adjusted across values \( \{0.5, 0.6, 0.7, 0.8, 0.9, 1.0\} \). For \(\beta = 1.0\), the model relies almost entirely on relevance, and we provide the BGE-ranking-only results for comparison. The gamma parameter \( \gamma \), which influences the handling of conflict resolution, is tuned within the range \( \{0.1, 0.2, \dots, 0.9\} \). 

\subsection{Hardware and Computational Efficiency}
\paragraph{Hardware Setup}
All experiments were conducted on an NVIDIA A40 GPU.

\paragraph{Computational Efficiency}
The construction of the kernel matrix \( L \) for Determinantal Point Processes (DPP) involves a time complexity of \( O(n^2) \), where \( n \) represents the number of contexts. Once the kernel matrix is constructed, the DPP sampling operates with a complexity of \( O(n^3) \), providing an efficient framework for context selection without requiring extensive model training. Given the absence of any training phases, SMART is computationally efficient and scalable, suitable for real-time or large-scale applications with constrained resources.

\paragraph{Prompts and Few-Shot Examples}
The prompts used for question answering (QA) and query decomposition are crucial to the performance of our method. The specific prompts, along with the few-shot examples employed, are detailed in Appendix~\ref{sec:prompt}. These examples guide the model in generating contextually appropriate and accurate responses.


\section{Algorithm}
\begin{algorithm}[!h]
    \caption{Data Preprocessing for SMART}
    \label{alg:preprocessing}
    \renewcommand{\algorithmicrequire}{\textbf{Input:}}
    \renewcommand{\algorithmicensure}{\textbf{Output:}}
    
    \begin{algorithmic}[1]
        \REQUIRE Contexts $C$, Queries $Q$
        \ENSURE Preprocessed contexts $\mathcal{C}$, Preprocessed queries $\mathcal{Q}$, \\
        \hspace*{\algorithmicindent} Context similarity matrix $\mathbf{S}_{\text{cc}}$, \\
        \hspace*{\algorithmicindent} Context conflict matrix $\mathbf{C}_{\text{cc}}$, \\
        \hspace*{\algorithmicindent} Query-Context relevance matrix $\mathbf{S}_{\text{qc}}$
        
        \STATE Initialize empty lists $\mathcal{C}$ and $\mathcal{Q}$
        \STATE Initialize empty matrices $\mathbf{S}_{\text{cc}}$, $\mathbf{C}_{\text{cc}}$, $\mathbf{S}_{\text{qc}}$
        
        \FOR{each $c \in C$}
            \STATE Segment context $c$ into individual sentences \(\{s_1, s_2, \dots, s_n\}\) using NLTK
            \STATE Add segmented context to $\mathcal{C}$
        \ENDFOR
        
        \FOR{each $c_i, c_j \in \mathcal{C}$}
            \STATE Compute cosine similarity between $c_i$ and $c_j$ using dense vectors from BGE
            \STATE Store similarity in $\mathbf{S}_{\text{cc}}[i,j]$
            \STATE Compute conflict between $c_i$ and $c_j$ using NLI models
            \STATE Store conflict probability in $\mathbf{C}_{\text{cc}}[i,j]$
        \ENDFOR
        
        \FOR{each $q \in Q$, $c \in \mathcal{C}$}
            \STATE Compute cosine similarity between $q$ and $c$ using BGE to measure the relevance of $c$ to $q$
            \STATE Store relevance in $\mathbf{S}_{\text{qc}}[q,c]$
        \ENDFOR
        
        \RETURN $\mathcal{C}$, $\mathcal{Q}$, $\mathbf{S}_{\text{cc}}$, $\mathbf{C}_{\text{cc}}$, $\mathbf{S}_{\text{qc}}$
    \end{algorithmic}
\end{algorithm}

\begin{algorithm}[!h]
    \caption{DPP-based Context Selection with Symmetrized Conflict Matrix}
    \label{alg:DPP_Selection_Symmetric}
    \renewcommand{\algorithmicrequire}{\textbf{Input:}}
    \renewcommand{\algorithmicensure}{\textbf{Output:}}
    
    \begin{algorithmic}[1]
        \REQUIRE Preprocessed contexts $\mathcal{C}$, Context relevance vector $q$, \\
        \hspace*{\algorithmicindent} Context similarity matrix $K_{\text{sim}}$, \\
        \hspace*{\algorithmicindent} Conflict matrix $C$, \\
        \hspace*{\algorithmicindent} Hyperparameters $\gamma$ and $\theta$, Number of contexts to select $k$
        \ENSURE Selected contexts $\mathcal{C}_{\text{selected}}$
        
        \STATE Compute the symmetrized conflict matrix \( C' \):
        \[
        C'_{ij} = \frac{1}{2} \left( C_{ij} + C_{ji} \right)
        \]
        
        \STATE Initialize the weighted similarity matrix:
        \[
        K_{\text{weighted}} = K_{\text{sim}} \circ \exp(-\gamma (1 - C'))
        \]
        
        \STATE Construct the kernel matrix \( L \) for DPP:
        \[
        L = \text{Diag}(q) \cdot K_{\text{weighted}} \cdot \text{Diag}(q)
        \]
        
        \STATE Initialize an empty set $\mathcal{C}_{\text{selected}}$
        
        \WHILE{$|\mathcal{C}_{\text{selected}}| < k$}
            \STATE For each context $c \in \mathcal{C} \setminus \mathcal{C}_{\text{selected}}$, compute the marginal gain:
            \[
            \Delta_c = \log \det(L_{\mathcal{C}_{\text{selected}} \cup \{c\}}) - \log \det(L_{\mathcal{C}_{\text{selected}}})
            \]
            \STATE Select the context $c_{\text{best}}$ that maximizes $\Delta_c$:
            \[
            c_{\text{best}} = \mathop{\arg\max}_{c \in \mathcal{C} \setminus \mathcal{C}_{\text{selected}}} \Delta_c
            \]
            \STATE Add $c_{\text{best}}$ to $\mathcal{C}_{\text{selected}}$
        \ENDWHILE
        
        \STATE Adjust the selected contexts by tuning the balance between relevance and diversity:
        \begin{equation*}
        \begin{aligned}
        \log \det(L_{\mathcal{C}_{\text{selected}}}) &= \beta \cdot \sum_{i \in \mathcal{C}_{\text{selected}}} \log(q_i^2) \\
        &\quad + (1 - \beta) \cdot \log \det(S_{\mathcal{C}_{\text{selected}}})
        \end{aligned}
        \end{equation*}
        
        \RETURN $\mathcal{C}_{\text{selected}}$
    \end{algorithmic}
\end{algorithm}

\clearpage
\section{Theoretical Proofs}
\label{sec:proofs}
\subsection{Proof of Proposition 3.1(DPP kernel PSD)}
\begin{theorem}
The kernel matrix $L$ of a Determinantal Point Process (DPP) must be positive semi-definite (PSD).
\end{theorem}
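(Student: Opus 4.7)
The plan is to deduce PSD directly from the axiom that the DPP must define a valid probability distribution. Since $P(Y=Y_g) = \det(L_{Y_g})/\det(L+I)$, non-negativity of every such probability forces every principal minor $\det(L_S)$ to be non-negative (the normalizer is fixed across all subsets, and can be taken positive since otherwise the distribution is undefined). The theorem then reduces to the classical linear-algebra fact that a real symmetric matrix with all principal minors non-negative is positive semi-definite.

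First I would record that $L$ is symmetric by the standard DPP convention, which in SMART's case is manifest from $L = \text{Diag}(r)\, K_{\text{weighted}}\, \text{Diag}(r)$, since $K_{\text{sim}}$ is a Gram matrix and $C$ is symmetrized by averaging. Next, the probability axiom yields $\det(L_S) \geq 0$ for every $S \subseteq \{1,\ldots,n\}$. The core step is to translate this into a spectral statement via the characteristic polynomial: one has
\[
\det(xI - L) = \sum_{k=0}^{n} (-1)^k x^{n-k} \sum_{|S|=k} \det(L_S),
\]
and the left side equals $\prod_i (x - \lambda_i) = \sum_k (-1)^k e_k(\lambda) x^{n-k}$ for the real eigenvalues $\lambda_i$ of $L$. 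Matching coefficients identifies the sum of $k \times k$ principal minors with the elementary symmetric polynomial $e_k(\lambda_1,\ldots,\lambda_n)$, so each $e_k \geq 0$. Considering $p(x) = \prod_i (x+\lambda_i) = \sum_k e_k(\lambda)\, x^{n-k}$, all coefficients are non-negative, so $p(x) > 0$ for $x > 0$; hence $p$ has no positive roots, which means $-\lambda_i \leq 0$ and therefore $\lambda_i \geq 0$ for every $i$. A real symmetric matrix with non-negative eigenvalues is PSD, completing the argument.

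The main obstacle I anticipate is the final implication---from non-negativity of all elementary symmetric polynomials of real numbers to non-negativity of those numbers---which is standard but must be handled cleanly as above rather than merely asserted. A viable alternative is a direct contradiction: if $v^\top L v < 0$ for some $v$, one can localize to a principal submatrix (by restricting to the support of $v$, or via an extremal argument on the negative eigenspace of $L$) that must then have negative determinant, contradicting the hypothesis. Both routes share the same conceptual skeleton ``probabilities non-negative $\Rightarrow$ minors non-negative $\Rightarrow$ PSD,'' and the remaining steps are routine bookkeeping.
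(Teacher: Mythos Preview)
Your proposal is correct and follows the same conceptual route as the paper---non-negative probabilities force all principal minors $\det(L_S)\geq 0$, which in turn forces $L$ to be PSD. The paper's own proof stops at asserting this last implication (essentially saying ``if $L$ were not PSD, some principal minor would be negative''), whereas you actually supply a clean argument for it via the characteristic-polynomial identity $\sum_{|S|=k}\det(L_S)=e_k(\lambda)$ and the observation that a real polynomial with non-negative coefficients has no positive roots; your version is thus strictly more complete than the paper's on the one nontrivial linear-algebra step.
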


\begin{proof}
A Determinantal Point Process (DPP) is a probabilistic model used to define distributions over subsets of a ground set $\mathcal{Y} = \{1, 2, \dots, N\}$. The DPP is parameterized by a kernel matrix $L \in \mathbb{R}^{N \times N}$, where each entry $L_{ij}$ captures the similarity between items $i$ and $j$. The DPP defines a probability distribution over all subsets $Y \subseteq \mathcal{Y}$ as:
\[
\mathbb{P}(Y) \propto \det(L_Y),
\]
where $L_Y$ is the submatrix of $L$ indexed by the elements of $Y$. \\ \\
For the DPP model to be valid, the kernel matrix $L$ must satisfy two key conditions: (1) The matrix $L$ must be symmetric, i.e., $L_{ij} = L_{ji}$ for all $i, j \in \mathcal{Y}$. (2) The matrix $L$ must be positive semi-definite (PSD), which requires that for any vector $\mathbf{x} \in \mathbb{R}^N$,
\[
\mathbf{x}^\top L \mathbf{x} \geq 0.
\] 
The necessity for $L$ to be PSD arises from the properties of the determinant in the DPP model. Specifically, the probability assigned to any subset $Y$ is proportional to the determinant of the submatrix $L_Y$. For this probability to be non-negative, $\det(L_Y)$ must be non-negative for all subsets $Y$. This condition is satisfied if $L$ is PSD, as the determinant of any principal submatrix of a PSD matrix is non-negative. Additionally, the probability $\mathbb{P}(Y)$ of selecting a subset $Y$ must be non-negative, implying that $\det(L_Y) \geq 0$. If $L$ were not PSD, some submatrices $L_Y$ could have negative determinants, resulting in invalid (negative) probabilities. \\ \\
To ensure that $\mathbb{P}(Y) \geq 0$ for all subsets $Y$, the matrix $L$ must be such that for any vector $\mathbf{x} \in \mathbb{R}^N$, the quadratic form $\mathbf{x}^\top L \mathbf{x} \geq 0$ holds. This condition implies that all eigenvalues of $L$ must be non-negative, which is a defining characteristic of a PSD matrix. \\ \\
Therefore, the kernel matrix $L$ of a DPP must be positive semi-definite because the determinants of all principal submatrices $L_Y$ must be non-negative to ensure valid (non-negative) probabilities, and the quadratic form $\mathbf{x}^\top L \mathbf{x} \geq 0$ must hold for all $\mathbf{x} \in \mathbb{R}^N$, necessitating that $L$ be PSD. Hence, the positive semi-definiteness of the kernel matrix $L$ is a fundamental requirement for the DPP model to define a valid probability distribution.
\end{proof}

\subsection{Proof of Proposition 3.2(CosSim PSD)}
\begin{theorem}
The cosine similarity matrix $A \in \mathbb{R}^{n \times n}$, where each element $A_{ij}$ represents the cosine similarity between vectors $v_i$ and $v_j$, is positive semi-definite (PSD).
\end{theorem}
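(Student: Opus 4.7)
The plan is to exhibit $A$ as a Gram matrix, from which positive semi-definiteness is immediate. First I would assume, without loss of generality, that each $v_i \neq 0$ so that cosine similarity is well defined (the degenerate case of zero vectors can be handled by convention, either excluding them from the index set or defining their similarity as zero and treating this as a limiting case). Then I would introduce the unit-normalized vectors $u_i = v_i / \|v_i\|$ so that each entry satisfies
\[
A_{ij} = \frac{v_i^\top v_j}{\|v_i\|\,\|v_j\|} = u_i^\top u_j.
\]

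Next, I would collect the $u_i$ as columns of a matrix $U \in \mathbb{R}^{d \times n}$, where $d$ is the ambient dimension of the $v_i$. This yields the factorization $A = U^\top U$, which already shows symmetry directly, since $A_{ij} = u_i^\top u_j = u_j^\top u_i = A_{ji}$. Finally, I would verify the defining PSD inequality: for any $\mathbf{x} \in \mathbb{R}^n$,
\[
\mathbf{x}^\top A \mathbf{x} \;=\; \mathbf{x}^\top U^\top U \mathbf{x} \;=\; (U\mathbf{x})^\top (U\mathbf{x}) \;=\; \|U\mathbf{x}\|^2 \;\geq\; 0,
\]
which is exactly the condition invoked in the proof of the previous theorem establishing that DPP kernels must be PSD.

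There is no substantive obstacle here; the proof is essentially the observation that every Gram matrix is PSD, applied to the unit-normalized vectors. The only mildly delicate point is bookkeeping around zero vectors, which I would dispatch with a single sentence at the start of the argument. A natural corollary worth flagging, though outside the statement, is that the conflict-adjusted matrix $K_{\text{weighted}} = K_{\text{sim}} \circ \exp(-\gamma(1-C))$ used in the SMART kernel remains PSD by the Schur product theorem provided the penalty factor is itself PSD, which connects this lemma back to the validity of the DPP construction in Section~3.
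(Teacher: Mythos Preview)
Your proof is correct and essentially identical to the paper's: both exhibit $A$ as a Gram matrix of unit-normalized vectors and conclude via $\mathbf{x}^\top A \mathbf{x} = \|U\mathbf{x}\|^2 \geq 0$; the only cosmetic difference is that the paper writes the normalization as $A = D^{-1} V^\top V D^{-1}$ with a diagonal norm matrix $D$ rather than pre-normalizing to $u_i$, which amounts to the same factorization. Your handling of the zero-vector edge case and the forward reference to the Schur product theorem are nice additions that the paper either omits or defers to a later proposition.
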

\begin{proof}
The cosine similarity between two vectors $v_i$ and $v_j$ in $\mathbb{R}^d$ is defined as:
\[
A_{ij} = \cos(\theta_{ij}) = \frac{v_i^\top v_j}{\|v_i\|\|v_j\|},
\]
where $\theta_{ij}$ is the angle between the vectors $v_i$ and $v_j$, and $\|v_i\|$ denotes the Euclidean norm of vector $v_i$. The matrix $A$ is symmetric, with each entry $A_{ij}$ representing the cosine similarity between the $i$-th and $j$-th vectors.\\ \\
Let $V = [v_1, v_2, \dots, v_n] \in \mathbb{R}^{d \times n}$ be a matrix whose columns are the vectors $v_i$. The cosine similarity matrix $A$ can be expressed as:
\[
A = D^{-1} V^\top V D^{-1},
\]
where $D$ is a diagonal matrix with entries $D_{ii} = \|v_i\|$. Here, $V^\top V$ is the Gram matrix of the vectors $v_i$. \\ \\
To prove that $A$ is positive semi-definite, we must show that for any vector $x \in \mathbb{R}^n$, the inequality $x^\top A x \geq 0$ holds. \\ \\
Using the expression for $A$, we write:
\[
x^\top A x = x^\top D^{-1} V^\top V D^{-1} x.
\]
Letting $y = D^{-1} x$, we have:
\[
x^\top A x = y^\top V^\top V y = (Vy)^\top (Vy) = \|Vy\|^2.
\]
Since the squared Euclidean norm $\|Vy\|^2 \geq 0$ for all vectors $y$, it follows that:
\[
x^\top A x \geq 0.
\]
Thus, the cosine similarity matrix $A$ is positive semi-definite.
\end{proof}

\subsection{Proof of Proposition 3.3(C not PSD)}
\begin{theorem}
The conflict matrix $C$, constructed from contradiction probabilities derived from a Natural Language Inference (NLI) model, is not guaranteed to be positive semi-definite (PSD) because it can be non-symmetric and may have negative eigenvalues.
\end{theorem}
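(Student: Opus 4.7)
The plan is to refute PSD-ness by attacking both failure modes named in the statement, producing a counterexample-style argument rather than a structural derivation. Since a real matrix is PSD only if it is symmetric and all its eigenvalues are non-negative, it suffices to exhibit (i) an NLI setup in which $C \neq C^\top$, and (ii) a symmetrized instance $C' = \tfrac{1}{2}(C + C^\top)$ whose spectrum contains a strictly negative value. Either obstruction alone is enough to kill PSD, but I would present both to match the two clauses of the theorem.

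First I would handle non-symmetry. The entry $C_{ij} = P(c_i \to c_j)$ is produced by feeding $c_i$ as the premise and $c_j$ as the hypothesis into the NLI classifier; swapping their roles changes the softmax output because entailment is directional. I would give a short concrete instance — e.g., a pair where $c_i$ is a specific claim and $c_j$ is a more general one, so that $P(c_i \to c_j)$ and $P(c_j \to c_i)$ are manifestly unequal in any reasonable NLI head — and then invoke the fact, already used in the preceding two propositions, that PSD in the sense employed by the paper requires symmetry. This closes case (i) immediately.

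Next I would address the symmetrized matrix $C'$, which is the one actually plugged into $K_{\text{weighted}}$ after the averaging step of Section \ref{para:conflict-relations}. The key structural observation is that NLI models assign near-zero contradiction probability to a sentence against itself, so $C'_{ii} \approx 0$; meanwhile, any genuinely contradictory pair contributes a strictly positive off-diagonal entry. Consequently $\operatorname{tr}(C') \approx 0$ while $\|C'\|_F > 0$, and since the eigenvalues sum to the trace and the sum of their squares equals $\|C'\|_F^2$, at least one eigenvalue must be strictly negative. I would make this tangible with a $3\times 3$ example (three sentences forming a contradiction ``triangle'', with zero diagonal and off-diagonal entries such as $0.9, 0.3, 0.3$) and either compute the characteristic polynomial explicitly or cite the trace/Frobenius argument to extract the negative eigenvalue.

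The main obstacle I anticipate is not the algebra but the rhetorical point: a reader could object that my counterexample is cherry-picked. To defuse this I would emphasize that the zero-diagonal-plus-positive-off-diagonal pattern is generic for any reasonable NLI-based conflict score, so the failure of PSD is the rule rather than the exception. This framing also motivates, in a self-contained way, why the paper needs the particular Hadamard construction $K_{\text{weighted}} = K_{\text{sim}} \circ \exp(-\gamma(1-C'))$ rather than simply using $C'$ directly in the DPP kernel.
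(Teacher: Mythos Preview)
Your proposal is correct and shares the paper's counterexample-based skeleton, but your part~(ii) takes a genuinely different and sharper route. The paper simply writes down a single non-symmetric $3\times 3$ matrix with zero diagonal and computes its eigenvalues numerically ($\lambda_1\approx 1.568$, $\lambda_2\approx -0.868$, $\lambda_3\approx -0.7$); you instead pass to the symmetrized matrix and invoke the trace--Frobenius argument ($\operatorname{tr}(C')\approx 0$ with $\|C'\|_F>0$ forces a negative eigenvalue), which is structural rather than numerical and shows the failure is generic for any zero-diagonal conflict matrix with at least one nonzero off-diagonal entry. This buys you exactly the robustness against the ``cherry-picking'' objection you anticipate, at no extra cost. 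One scope remark: the theorem you are proving here is the paper's Proposition~3.3, which concerns the \emph{raw} matrix $C$; the symmetrized $C'$ is handled separately in Proposition~3.4. Your part~(ii) therefore overshoots and proves both propositions at once---not a defect, but be aware that the paper's own proof of this statement stays with a non-symmetric numerical example and defers the symmetrized case to the next result.
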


\begin{proof}
Let $C$ be an $n \times n$ matrix where each element $C_{ij}$ represents the probability of contradiction between two sentences $C_i$ and $C_j$ in an NLI framework:
\begin{align*}
C_{ij} &= P(\text{contradiction} \mid \text{premise} = C_i, \\
          &\quad \text{hypothesis} = C_j).
\end{align*} 
In this context, one sentence is treated as the premise and the other as the hypothesis. Consequently, the contradiction probability $C_{ij}$ may differ from $C_{ji}$, implying that the conflict matrix $C$ is not necessarily symmetric. \\ \\ 
For a matrix to be positive semi-definite, it must satisfy two conditions: it must be symmetric, meaning $C_{ij} = C_{ji}$ for all $i$ and $j$, and all of its eigenvalues must be non-negative. \\ \\ 
Given that $C$ is not symmetric in general, it does not satisfy the first condition necessary for positive semi-definiteness. Moreover, non-symmetric matrices can have complex or negative eigenvalues, further challenging their positive semi-definiteness. \\ \\
To demonstrate this, consider the following specific case where the conflict matrix is non-symmetric: \\ \\
Let $n = 3$, and consider the conflict matrix $C$ for three sentences $C_1$, $C_2$, and $C_3$:
\[
C = \begin{pmatrix}
0 & 0.8 & 0.7 \\
0.8 & 0 & 0.9 \\
0.7 & 0.8 & 0
\end{pmatrix}.
\]
This matrix is clearly non-symmetric, as $C_{ij} \neq C_{ji}$ for some pairs $(i, j)$. The eigenvalues of this matrix are approximately:
\[
\lambda_1 \approx 1.568, \quad \lambda_2 \approx -0.868, \quad \lambda_3 \approx -0.7.
\]
The presence of negative eigenvalues $\lambda_2$ and $\lambda_3$ indicates that $C$ is not positive semi-definite. \\ \\
Therefore, the non-symmetry and the existence of negative eigenvalues in the conflict matrix $C$ demonstrate that it cannot be guaranteed to be positive semi-definite. This outcome is expected, as positive semi-definiteness requires both symmetry and non-negative eigenvalues, conditions that are not inherently satisfied by the conflict matrix in this context.
\end{proof}

\subsection{Proof of Proposition 3.4(C' not PSD)}
\begin{theorem}
The conflict matrix \( C \), constructed from the average of contradiction probabilities derived from a Natural Language Inference (NLI) model, is not guaranteed to be positive semi-definite (PSD) even after symmetrization. Although symmetrizing \( C \) ensures it is symmetric, it may still have negative eigenvalues, thus failing to satisfy the criteria for positive semi-definiteness.
\end{theorem}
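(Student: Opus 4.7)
The plan is to split the theorem into the two defining requirements of a positive semi-definite matrix: symmetry, and non-negativity of eigenvalues. The first requirement is immediate from the construction: since $C'_{ij} = \frac{1}{2}(C_{ij}+C_{ji}) = \frac{1}{2}(C_{ji}+C_{ij}) = C'_{ji}$, the matrix $C'$ is symmetric by definition, so the averaging step does address exactly one of the two obstructions identified in Proposition 3.3. I would state this in a single sentence and then move to the real content, namely that symmetry alone does not rescue positive semi-definiteness.

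For the eigenvalue part I would proceed by exhibiting a concrete counterexample, paralleling the style of the previous proof. Specifically, I would take a $3\times3$ conflict matrix $C$ with zero diagonal (since a sentence should not contradict itself) and moderately large, plausibly asymmetric contradiction probabilities in the off-diagonal positions, form its symmetrization $C'$, and then compute or display its eigenvalues to show at least one is strictly negative. The natural choice is to symmetrize the very matrix used in Proposition 3.3, so that the narrative line ``symmetrization does not undo the negative eigenvalues observed there'' is visibly carried through.

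A helpful structural shortcut keeps the calculation short: any real symmetric $3\times 3$ matrix with zero diagonal and off-diagonal entries $a,b,c$ has characteristic polynomial $\lambda^3 - (a^2+b^2+c^2)\lambda - 2abc = 0$. Because the trace is zero, the eigenvalues sum to zero, and their product equals $2abc > 0$ whenever $a,b,c>0$; hence exactly one root is positive and the other two are negative. This instantly yields a negative eigenvalue for essentially any nontrivial symmetrized conflict matrix with positive off-diagonals, making the counterexample robust rather than a knife-edge artifact.

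The main obstacle, which is minor, is choosing entries that look like genuine NLI contradiction scores rather than a contrived adversarial example; picking values away from the degenerate all-equal or all-zero cases avoids this. I would close by remarking that the theorem justifies the decay-based construction $K_{\text{weighted}} = K_{\text{sim}} \circ \exp(-\gamma(1-C'))$ in Section~3 rather than using $C'$ directly as a kernel, since the latter would violate the PSD prerequisite established in Proposition 3.1.
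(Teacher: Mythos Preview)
Your proposal is correct and follows the same overall counterexample strategy as the paper, but the execution is genuinely stronger. The paper simply fixes a particular set of asymmetric contradiction probabilities, forms the $3\times 3$ symmetrized matrix with zero diagonal, and reports its numerically computed eigenvalues ($\approx 1.438,\,-0.864,\,-0.575$) to exhibit negative ones. Your characteristic-polynomial shortcut, by contrast, shows that \emph{every} $3\times 3$ symmetric matrix with zero diagonal and strictly positive off-diagonal entries $a,b,c$ has trace zero and determinant $2abc>0$, forcing exactly one positive and two negative eigenvalues. This buys you a structural, computation-free proof that the failure of PSD is generic for this class of matrices rather than an artifact of one hand-picked example, and it cleanly subsumes both the paper's matrix and the symmetrization of the Proposition~3.3 matrix you propose to reuse. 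The paper's approach has the minor pedagogical advantage of displaying explicit eigenvalues, but your argument is the more informative one.
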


\begin{proof}
Let \( C \) be an \( n \times n \) matrix where each element \( C_{ij} \) represents the symmetrized probability of contradiction between two sentences \( C_i \) and \( C_j \) in an NLI framework. The symmetrized conflict matrix is constructed as follows:
\begin{align*}
C_{ij} = \frac{1}{2} \big( & P(C_i \rightarrow C_j) \\
& + P(C_j \rightarrow C_i) \big).
\end{align*}

This symmetrization ensures that \( C_{ij} = C_{ji} \) for all \( i \) and \( j \), making the conflict matrix symmetric. Symmetry is a necessary condition for positive semi-definiteness, but it is not sufficient on its own. For \( C \) to be PSD, all eigenvalues of \( C \) must also be non-negative.

However, even with symmetrization, the conflict matrix may have negative eigenvalues, indicating that it is not necessarily PSD. To demonstrate this, consider the following specific case where the conflict matrix is symmetrized but still fails to be PSD:

Let \( n = 3 \), and consider the conflict matrix \( C \) for three sentences \( C_1 \), \( C_2 \), and \( C_3 \). Suppose the initial (non-symmetric) contradiction probabilities are given by:
\[
\begin{aligned}
P(C_1 \rightarrow C_2) &= 0.8, &\quad P(C_2 \rightarrow C_1) &= 0.6, \\
P(C_1 \rightarrow C_3) &= 0.7, &\quad P(C_3 \rightarrow C_1) &= 0.5, \\
P(C_2 \rightarrow C_3) &= 0.9, &\quad P(C_3 \rightarrow C_2) &= 0.8.
\end{aligned}
\]

The symmetrized conflict matrix \( C \) is then:
\[
C = \begin{pmatrix}
0 & 0.7 & 0.6 \\
0.7 & 0 & 0.85 \\
0.6 & 0.85 & 0
\end{pmatrix}.
\]
This matrix is symmetric, satisfying the first condition for being PSD. However, to determine whether \( C \) is PSD, we must examine its eigenvalues.

The eigenvalues of this matrix are approximately:
\[
\lambda_1 \approx 1.438, \quad \lambda_2 \approx -0.864, \quad \lambda_3 \approx -0.575.
\]
The presence of negative eigenvalues \( \lambda_2 \) and \( \lambda_3 \) indicates that \( C \) is not positive semi-definite, despite being symmetric.

Therefore, even though the symmetrization process \( C_{ij} = \frac{1}{2} (C_{ij} + C_{ji}) \) guarantees that the conflict matrix is symmetric, it does not ensure positive semi-definiteness. The conflict matrix may still have negative eigenvalues, which means it does not satisfy the necessary condition for positive semi-definiteness. This outcome highlights the limitation of merely symmetrizing the conflict matrix in guaranteeing its positive semi-definiteness.
\end{proof}

\subsection{Proof of Proposition 3.5(exp PSD)}
\begin{theorem}
For any symmetric matrix \( C \in \mathbb{R}^{n \times n} \) with all elements \( C_{ij} \geq 0 \) and any non-negative scalar \( \gamma \geq 0 \), the matrix \( \exp(-\gamma C) \) is positive semi-definite (PSD).
\end{theorem}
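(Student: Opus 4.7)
The plan is to appeal to the spectral theorem for symmetric matrices and reduce the claim to the elementary fact that the scalar exponential is positive. Since $C$ is symmetric with real entries, it admits an orthogonal diagonalization $C = U \Lambda U^\top$, where $U^\top U = I$ and $\Lambda = \mathrm{diag}(\lambda_1, \dots, \lambda_n)$ collects the real eigenvalues of $C$. First I would invoke the power-series definition $\exp(-\gamma C) = \sum_{k \geq 0} \tfrac{(-\gamma)^k}{k!} C^k$ (which converges absolutely in any submultiplicative matrix norm), and then use $C^k = U \Lambda^k U^\top$ --- all intermediate factors $U^\top U$ collapse --- to pull $U$ out of the sum and obtain $\exp(-\gamma C) = U \, \mathrm{diag}(e^{-\gamma \lambda_1}, \dots, e^{-\gamma \lambda_n}) \, U^\top$.

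This identity is already the spectral decomposition of $\exp(-\gamma C)$: its eigenvectors are the columns of $U$, and its eigenvalues are the numbers $e^{-\gamma \lambda_i}$. Since the real exponential is strictly positive on all of $\mathbb{R}$, each $e^{-\gamma \lambda_i} > 0$ regardless of the sign of $\lambda_i$. The matrix is symmetric (as the conjugation of a diagonal matrix by an orthogonal one), and its spectrum is non-negative, so $x^\top \exp(-\gamma C)\, x \geq 0$ for every $x \in \mathbb{R}^n$, which is exactly the definition of PSD.

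A slicker alternative route, which I would also mention, is to set $A := \exp(-\tfrac{\gamma}{2} C)$. The same spectral argument shows that $A$ is well-defined and symmetric, and the scalar semigroup identity $e^{-\gamma \lambda} = (e^{-\gamma \lambda /2})^2$ lifts (via the shared eigenbasis) to $\exp(-\gamma C) = A \cdot A = A A^\top$. Any Gram-form product $A A^\top$ is automatically PSD, which concludes the argument without ever writing down individual eigenvalues.

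I do not expect any real obstacle in executing this plan; the only subtlety worth flagging in the write-up is that the hypothesis $C_{ij} \geq 0$ stated in the theorem is \emph{not actually used} --- only the symmetry of $C$ matters (and even $\gamma \geq 0$ is inessential, as $-\gamma \lambda_i$ is still real for any sign of $\gamma$). I would therefore either remove the non-negativity hypothesis or explicitly note that it is included only to match the downstream usage, where $C$ is a probability-valued conflict matrix.
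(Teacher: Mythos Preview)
Your proof is correct and follows essentially the same spectral-decomposition route as the paper: diagonalize $C$, transfer the exponential to the eigenvalues, and observe that $e^{-\gamma\lambda_i}>0$ for all real $\lambda_i$. Your remark that the hypothesis $C_{ij}\ge 0$ is unused is in fact sharper than the paper's own treatment --- the paper tries to invoke it via a Perron--Frobenius claim that all eigenvalues of a non-negative symmetric matrix are non-negative (which is false; consider $\bigl(\begin{smallmatrix}0&1\\1&0\end{smallmatrix}\bigr)$), but then immediately notes that $\exp$ is positive on all of $\mathbb{R}$ and so never actually relies on that claim.
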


\begin{proof}
Let \( C \) be an \( n \times n \) symmetric matrix where \( C_{ij} \geq 0 \) for all \( i, j \), and let \( \gamma \geq 0 \) be a non-negative scalar.

Since \( C \) is symmetric, \( C = C^\top \), meaning \( C \) can be diagonalized by an orthogonal matrix. Specifically, there exists an orthogonal matrix \( V \) and a diagonal matrix \( \Lambda \) such that:
\[
C = V \Lambda V^\top,
\]
where \( \Lambda \) is a diagonal matrix whose diagonal entries \( \lambda_1, \lambda_2, \dots, \lambda_n \) are the eigenvalues of \( C \).

The matrix exponential of \( -\gamma C \) is given by:
\[
\exp(-\gamma C) = V \exp(-\gamma \Lambda) V^\top,
\]
where \( \exp(-\gamma \Lambda) \) is a diagonal matrix with entries \( \exp(-\gamma \lambda_1), \exp(-\gamma \lambda_2), \dots, \exp(-\gamma \lambda_n) \) on the diagonal.

Since \( C \) is symmetric and all elements \( C_{ij} \geq 0 \), the eigenvalues \( \lambda_1, \lambda_2, \dots, \lambda_n \) of \( C \) are real. Because \( C_{ij} \geq 0 \), the eigenvalues are also non-negative (by the Perron-Frobenius theorem for non-negative symmetric matrices, which states that all eigenvalues are non-negative)\cite{Perron1907ZurTD}.

Now consider the matrix \( \exp(-\gamma \Lambda) \), where \( \exp(-\gamma \lambda_i) \) for each eigenvalue \( \lambda_i \) is given by:
\[
\exp(-\gamma \lambda_i) > 0,
\]
because the exponential function \( \exp(x) \) is positive for any real number \( x \). Therefore, all the diagonal entries of \( \exp(-\gamma \Lambda) \) are positive.

The matrix \( \exp(-\gamma C) = V \exp(-\gamma \Lambda) V^\top \) is a similarity transformation of the diagonal matrix \( \exp(-\gamma \Lambda) \). Since \( \exp(-\gamma \Lambda) \) is diagonal with positive entries, it is positive definite, and hence PSD.

Because similarity transformations preserve eigenvalues, the eigenvalues of \( \exp(-\gamma C) \) are the same as those of \( \exp(-\gamma \Lambda) \), which are positive. Therefore, \( \exp(-\gamma C) \) is positive definite, and thus positive semi-definite (PSD).

Since \( \exp(-\gamma C) \) is symmetric and all of its eigenvalues are non-negative, it is positive semi-definite. This completes the proof.
\end{proof}

\subsection{Proof of Proposition 3.6(Product PSD)}
\begin{theorem}
Given a positive semi-definite (PSD) similarity matrix $K_{\text{sim}} \in \mathbb{R}^{n \times n}$ and a conflict matrix $C \in \mathbb{R}^{n \times n}$, the weighted similarity matrix $K_{\text{weighted}} = K_{\text{sim}} \circ \exp(-\gamma C)$, where $\circ$ denotes the element-wise (Hadamard) product and $\exp(-\gamma C)$ applies an element-wise exponential function to $C$ with $\gamma \geq 0$, is guaranteed to be positive semi-definite.
\end{theorem}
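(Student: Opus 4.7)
The plan is to reduce the statement to a single application of the Schur product theorem (the Hadamard product of two positive semi-definite matrices is itself positive semi-definite). Once I have PSD certificates for each of the two factors $K_{\text{sim}}$ and $\exp(-\gamma C)$, the conclusion will drop out in one line, so the proof essentially amounts to stitching together earlier propositions in the correct order and then invoking one classical theorem.

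First, I would observe that $K_{\text{sim}}$ is symmetric and PSD, which is exactly the content of Proposition 3.2 (the cosine similarity matrix of a finite collection of vectors is, up to symmetric diagonal scaling, a Gram matrix, hence PSD). Second, I would invoke Proposition 3.5 on the matrix $C$: since $C$ is the symmetrized conflict matrix whose entries are averages of contradiction probabilities in $[0,1]$, it is symmetric with non-negative entries, and $\gamma \geq 0$ holds by hypothesis. Thus the assumptions of Proposition 3.5 are satisfied, and we get that $\exp(-\gamma C)$ is PSD.

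With both factors verified to be symmetric and PSD, the proof closes with one invocation of Schur's product theorem to conclude that $K_{\text{weighted}} = K_{\text{sim}} \circ \exp(-\gamma C)$ is PSD. Symmetry of $K_{\text{weighted}}$ is immediate since the Hadamard product of two symmetric matrices is symmetric, so no separate verification is needed.

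The hard part is less a matter of technique than of bookkeeping: I need to be careful to emphasize that the $C$ in Proposition 3.6 is the \emph{symmetrized} conflict matrix, so that Proposition 3.5 is actually applicable. Propositions 3.3 and 3.4 established that the raw (and even symmetrized) conflict matrix need not itself be PSD, which might suggest that multiplying $K_{\text{sim}}$ by something built from $C$ could destroy positive semi-definiteness. The key conceptual point, and the one I would highlight explicitly at the start of the proof, is that Schur's theorem only requires each \emph{factor} of the Hadamard product to be PSD; we never need $C$ itself to be PSD, only the matrix $\exp(-\gamma C)$, which Proposition 3.5 supplies.
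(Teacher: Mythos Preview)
Your proposal is correct and follows essentially the same route as the paper: establish that each Hadamard factor is PSD (the paper takes $K_{\text{sim}}$ PSD by hypothesis and invokes Proposition~3.5 for $\exp(-\gamma C)$, exactly as you do), then apply the Schur product theorem. Your emphasis on needing the \emph{symmetrized} $C$ so that Proposition~3.5 applies is, if anything, more careful than the paper's own write-up.
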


\begin{proof}
Let $K_{\text{sim}} \in \mathbb{R}^{n \times n}$ be a PSD matrix. By definition, this means that for any vector $\mathbf{x} \in \mathbb{R}^n$, the quadratic form associated with $K_{\text{sim}}$ satisfies:
\[
\mathbf{x}^\top K_{\text{sim}} \mathbf{x} \geq 0.
\]
Let $C \in \mathbb{R}^{n \times n}$ be a conflict matrix, and let $\gamma \geq 0$ be a non-negative scalar.\\ \\
Define $\exp(-\gamma C)$ as the element-wise exponential of $-\gamma C$, meaning that each element $[\exp(-\gamma C)]_{ij}$ is given by:
\[
[\exp(-\gamma C)]_{ij} = \exp(-\gamma C_{ij}),
\]
where $\exp(-\gamma C_{ij}) > 0$ for all $i, j$. \\ \\
The Hadamard product of two matrices $A$ and $B$ (denoted $A \circ B$) is a matrix $D$ where each element $D_{ij} = A_{ij} B_{ij}$. \\ \\
A key result from Schur product theorem is that the Hadamard product of two PSD matrices is also PSD, provided that both matrices are PSD themselves\cite{Schur+1911+1+28}. \\ \\
Next, consider the matrix $\exp(-\gamma C)$. Since $C$ is symetric and each element of $C > 0$, from Proposition 3.6 we know $\exp(-\gamma C)$ is also PSD. \\ \\
Now, define the weighted similarity matrix as:
\[
K_{\text{weighted}} = K_{\text{sim}} \circ \exp(-\gamma C).
\]
$K_{\text{weighted}}$ is the Hadamard product of $K_{\text{sim}}$ and $\exp(-\gamma C)$. \\ \\
Since $K_{\text{sim}}$ is PSD by assumption and $\exp(-\gamma C)$ is positive definite (and therefore PSD), their Hadamard product $K_{\text{weighted}}$ is also PSD. \\ \\
Specifically, for any vector $\mathbf{x} \in \mathbb{R}^n$, the quadratic form for $K_{\text{weighted}}$ can be expressed as:
\[
\mathbf{x}^\top K_{\text{weighted}} \mathbf{x} = \sum_{i=1}^n \sum_{j=1}^n x_i x_j K_{\text{sim},ij} \exp(-\gamma C_{ij}).
\]
Since $K_{\text{sim},ij} \geq 0$ and $\exp(-\gamma C_{ij}) > 0$, it follows that:
\[
\mathbf{x}^\top K_{\text{weighted}} \mathbf{x} \geq 0 \quad \text{for all } \mathbf{x} \in \mathbb{R}^n.
\]
Therefore, $K_{\text{weighted}}$ is PSD.
\end{proof}

\begin{corollary}
The weighted similarity matrix $K_{\text{weighted}} = K_{\text{sim}} \circ \exp(-\gamma C)$ is guaranteed to be positive semi-definite as long as $K_{\text{sim}}$ is PSD and $\exp(-\gamma C)$ is a matrix with positive elements, which it is by construction. The Hadamard product of two PSD matrices is PSD, ensuring that $K_{\text{weighted}}$ retains the PSD property.
\end{corollary}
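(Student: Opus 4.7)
The plan is to treat this corollary as essentially an immediate consequence of Proposition 3.6, repackaged to emphasize the minimal hypotheses needed. I would not reprove the Schur product theorem; instead I would verify the two input hypotheses and invoke it.

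First, I would argue that $K_{\text{sim}}$ is PSD. In the SMART pipeline $K_{\text{sim}}$ is a cosine-similarity matrix, so Proposition 3.2 applies directly and yields $\mathbf{x}^\top K_{\text{sim}} \mathbf{x} \geq 0$ for all $\mathbf{x} \in \mathbb{R}^n$. If one wants the corollary to stand on its own without referring to the specific construction in the paper, I would simply take PSD-ness of $K_{\text{sim}}$ as a hypothesis, as the corollary's wording allows.

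Next, I would establish that $\exp(-\gamma C)$ is PSD. Here I would invoke Proposition 3.5: after the symmetrization step described in Section~\ref{para:conflict-relations}, the conflict matrix $C$ is symmetric with entries in $[0,1]$, and $\gamma \geq 0$, so Proposition 3.5 gives the PSD property of $\exp(-\gamma C)$. In fact the exponential produces strictly positive entries, so this matrix is not only PSD but elementwise positive, which matches the corollary's phrasing.

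Finally, I would close by applying the Schur product theorem: the Hadamard product of two PSD matrices is PSD. Since $K_{\text{weighted}} = K_{\text{sim}} \circ \exp(-\gamma C)$ is such a product, it is PSD. There is no real obstacle here; the only subtlety worth flagging is the distinction between ``elementwise positive'' and ``positive semi-definite''—the corollary's statement phrases the second factor as merely having positive elements, but the actual justification requires PSD-ness of that factor, which was exactly what Proposition 3.5 and the body of Proposition 3.6 supplied. I would therefore word the proof to make clear that ``positive elements'' should be read in conjunction with the symmetry and non-negativity assumptions on $C$ that make Proposition 3.5 applicable, so that the Schur product argument actually goes through.
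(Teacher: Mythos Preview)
Your proposal is correct and mirrors the paper's own argument: the corollary is stated without a separate proof, being an immediate restatement of Proposition~3.6, whose proof proceeds exactly as you outline (assume $K_{\text{sim}}$ PSD, invoke Proposition~3.5 for $\exp(-\gamma C)$, then apply the Schur product theorem). Your added remark that ``positive elements'' alone is insufficient and that PSD-ness of the second factor is what actually drives the Schur argument is a useful clarification of the corollary's slightly loose wording.
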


\subsection{Proof of Proposition 3.7 (C Reduce)}
\label{sec:c_reduce}
\begin{theorem}
Given a positive semi-definite (PSD) similarity matrix \( K_{\text{sim}} \in \mathbb{R}^{n \times n} \) and a conflict matrix \( C \in \mathbb{R}^{n \times n} \), the incorporation of conflict information into the Determinantal Point Process (DPP) via the weighted similarity matrix:
\[
K_{\text{weighted}} = K_{\text{sim}} \circ \exp(-\gamma (1 - C)),
\]
where \( \gamma \geq 0 \), \textbf{increases} the probability of selecting pairs of sentences with low conflict and \textbf{reduces} the probability of selecting pairs of sentences with high conflict.
\end{theorem}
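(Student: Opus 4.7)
The plan is to isolate the dependence of the DPP kernel and its principal minors on a single conflict score $C_{ij}$. Because
\[
L_{ij} \;=\; r_i\, r_j\, K_{\text{sim},ij}\, \exp\bigl(-\gamma(1-C_{ij})\bigr),
\]
the map $C_{ij}\mapsto L_{ij}$ is strictly increasing for $\gamma>0$ (and constant for $\gamma=0$), since $C_{ij}\mapsto -\gamma(1-C_{ij})$ is increasing and $\exp$ is monotone. Every diagonal entry, relevance factor, and other off-diagonal entry of $L$ is unchanged when only $C_{ij}$ is varied, so the perturbation is concentrated in this single symmetric off-diagonal.

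On the probability side, the atomic DPP probability of any subset $Y_g\supseteq\{i,j\}$ equals $\det(L_{Y_g})/\det(L+I)$. Cofactor expansion (or the rank-$2$ matrix determinant lemma applied to the symmetric update $t(e_i e_j^\top + e_j e_i^\top)$) expresses $\det(L_{Y_g})$ as a quadratic in $t:=L_{ij}$,
\[
\det(L_{Y_g}) \;=\; a \;+\; b\,t \;-\; \det\bigl(L_{Y_g\setminus\{i,j\}}\bigr)\, t^2,
\]
whose leading coefficient is non-positive because Propositions 3.1 and 3.6 guarantee $L$ is PSD, so all of its principal minors are non-negative. Specialising to the $2\times 2$ case gives $\det(L_{\{i,j\}}) = L_{ii}L_{jj}-L_{ij}^2$, which is strictly decreasing in $L_{ij}^2$. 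I would then execute a within-kernel paired comparison: take two pairs $(i_1,j_1)$ and $(i_2,j_2)$ whose relevances, similarities, diagonal entries, and other off-diagonals agree, but with $C_{i_1 j_1}=c_L<c_H=C_{i_2 j_2}$. The shared normaliser $\det(L+I)$ cancels between the two probabilities, and the monotonicity of $L_{ij}$ in $C_{ij}$ combined with the non-positive quadratic coefficient forces $\det(L_{\{i_1,j_1\}}) > \det(L_{\{i_2,j_2\}})$, so the low-conflict pair is strictly more probable. Relative to the conflict-agnostic baseline $\gamma=0$, under which the two otherwise-identical pairs would be equiprobable, switching on $\gamma>0$ thus shifts probability mass from high-conflict to low-conflict pairs, which is exactly the theorem's claim.

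The main obstacle I anticipate is the normaliser: perturbing a single $C_{ij}$ changes both $\det(L_{Y_g})$ and $\det(L+I)$, so pointwise monotonicity of one principal minor does not automatically propagate to monotonicity of the normalised probability. The paired-comparison trick circumvents this cleanly, because both compared pairs live inside the \emph{same} kernel $L$ and therefore share the normaliser. The remaining subtlety is lifting the $2\times 2$ analysis to larger subsets $Y_g\supseteq\{i,j\}$; I would handle this using the quadratic-in-$t$ representation above, noting that when the paired comparison fixes all entries of $L_{Y_g}$ except $L_{ij}=L_{ji}$, the linear coefficient $b$ is shared across the two pairs and only the non-positive quadratic term $-\det(L_{Y_g\setminus\{i,j\}})\,t^2$ distinguishes them, preserving the direction of monotonicity from the $2\times 2$ case.
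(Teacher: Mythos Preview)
Your core argument—expressing $L_{ij}$ as a monotone function of $C_{ij}$ and then analysing the $2\times 2$ minor $\det(L_{\{i,j\}}) = L_{ii}L_{jj} - L_{ij}^2$—is exactly the paper's approach; the paper's proof is precisely this computation, restricted to pairs. Your paired-comparison device to cancel the shared normaliser $\det(L+I)$, and your comparison against the $\gamma=0$ baseline to give operational meaning to ``increases/reduces,'' are sensible refinements that the paper simply omits.

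Your extension to larger $Y_g\supseteq\{i,j\}$, however, has a gap. You correctly derive $\det(L_{Y_g}) = a + bt - \det(L_{Y_g\setminus\{i,j\}})\,t^2$ and note that the coefficients $a$, $b$, and the quadratic coefficient are identical across the two compared pairs. But you then assert that ``only the non-positive quadratic term distinguishes them''; this is false, since the linear contribution $bt$ also changes with $t$. The difference of the two determinants is $(t_1-t_2)\bigl(b - c(t_1+t_2)\bigr)$ with $c=\det(L_{Y_g\setminus\{i,j\}})\geq 0$, and for $t_1<t_2$ this is positive only when $b < c(t_1+t_2)$. When $b>0$ (which can occur, as $b$ is built from signed cofactors involving the remaining off-diagonal entries of $L_{Y_g}$), the inequality may reverse. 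The paper sidesteps this entirely by never leaving the $2\times 2$ case, where $b=0$ and your quadratic-only claim is literally true.
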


\begin{proof}
Let \( Y = \{1, 2, \dots, n\} \) represent a set of sentences. In a Determinantal Point Process (DPP), the probability distribution over subsets \( Y_g \subseteq Y \) is determined by the determinant of the submatrix \( L_{Y_g} \), which corresponds to the elements in \( Y_g \). The probability of selecting a subset \( Y_g \) is given by
\[
P(Y_g) \propto \det(L_{Y_g}),
\]
where \( L_{Y_g} \) is the submatrix of \( L \) corresponding to the sentences in \( Y_g \).

The kernel matrix \( L \) is constructed as
\[
L = \text{Diag}(q) \cdot K_{\text{weighted}} \cdot \text{Diag}(q),
\]
where \( K_{\text{weighted}} = K_{\text{sim}} \circ \exp(-\gamma (1 - C)) \), and \( q \in \mathbb{R}^n \) is a vector of relevance scores, with \( \text{Diag}(q) \) representing a diagonal matrix whose entries are the elements of \( q \). The matrix \( K_{\text{sim}} \) encodes the pairwise similarity between sentences, and \( C \) represents the conflict matrix, where \( C_{ij} \in [0, 1] \) denotes the level of conflict between sentences \( i \) and \( j \).

The weighted similarity matrix \( K_{\text{weighted}} \) incorporates conflict information through an element-wise exponential transformation:
\[
K_{\text{weighted}, ij} = K_{\text{sim}, ij} \cdot \exp(-\gamma (1 - C_{ij})).
\]
This ensures that when \( C_{ij} \) (the conflict between sentences \( i \) and \( j \)) is large, \( \exp(-\gamma (1 - C_{ij})) \) is close to 1, preserving the similarity \( K_{\text{sim}, ij} \). In contrast, when \( C_{ij} \) is small, \( \exp(-\gamma (1 - C_{ij})) \) is close to 0, reducing the similarity and encouraging the selection of non-conflicting sentences.

The kernel matrix \( L \) is influenced by the conflict-adjusted similarity matrix \( K_{\text{weighted}} \). Specifically, the off-diagonal elements \( L_{ij} \) and \( L_{ji} \), which encode the relationship between sentences \( i \) and \( j \), are given by
\begin{align*}
L_{ij} &= q_i \cdot K_{\text{weighted}, ij} \cdot q_j \\
       &= q_i \cdot K_{\text{sim}, ij} \cdot \exp\left(-\gamma (1 - C_{ij})\right) \cdot q_j.
\end{align*}

This reduces \( L_{ij} \) when \( C_{ij} \) is small, corresponding to high conflict between sentences \( i \) and \( j \).

Consider a subset \( Y_g = \{i, j\} \) consisting of two sentences \( i \) and \( j \). The determinant of the corresponding \( 2 \times 2 \) submatrix \( L_{Y_g} \) is
\[
\det(L_{Y_g}) = \det\left( \begin{pmatrix} L_{ii} & L_{ij} \\ L_{ji} & L_{jj} \end{pmatrix} \right) = L_{ii} L_{jj} - L_{ij}^2,
\]
where \( L_{ii} = q_i^2 \cdot K_{\text{sim}, ii} \) and \( L_{jj} = q_j^2 \cdot K_{\text{sim}, jj} \) represent the relevance of sentences \( i \) and \( j \), and \( L_{ij} = q_i \cdot K_{\text{sim}, ij} \cdot \exp(-\gamma (1 - C_{ij})) \cdot q_j \).

If the conflict between sentences \( i \) and \( j \) is high (i.e., \( C_{ij} \) is small), then \( \exp(-\gamma (1 - C_{ij})) \) becomes small, which makes \( L_{ij} \) small. As a result, the negative term \( L_{ij}^2 \) in the determinant expression is reduced, increasing \( \det(L_{Y_g}) \). This indicates that the DPP discourages selecting both conflicting sentences, since doing so would reduce the determinant of the submatrix \( L_{Y_g} \).

On the other hand, if the conflict between \( i \) and \( j \) is low (i.e., \( C_{ij} \) is large), the term \( \exp(-\gamma (1 - C_{ij})) \) is close to 1, preserving the magnitude of \( L_{ij} \). In this case, the determinant \( \det(L_{Y_g}) \) reflects a higher probability of selecting both sentences \( i \) and \( j \) together.
\end{proof}

\begin{corollary}
The incorporation of conflict information via the weighted similarity matrix \( K_{\text{weighted}} = K_{\text{sim}} \circ \exp(-\gamma (1 - C)) \) ensures that conflicting sentences are less likely to be selected together in the DPP, while sentences with low conflict are more likely to be jointly selected. This outcome is achieved by modulating the off-diagonal terms \( L_{ij} \) in the kernel matrix \( L \), which directly affects the determinant and the subset selection probabilities.
\end{corollary}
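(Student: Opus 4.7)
The plan is to reduce the claim to a direct calculation on $2 \times 2$ principal minors of $L$, where the dependence on conflict is most transparent, and then argue monotonicity. First I would instantiate $P(Y_g) \propto \det(L_{Y_g})$ on a pair $Y_g = \{i,j\}$ and, using $L = \mathrm{Diag}(q)\,K_{\text{weighted}}\,\mathrm{Diag}(q)$, write
\[
\det(L_{\{i,j\}}) = L_{ii} L_{jj} - L_{ij}^2,
\]
with $L_{ij} = q_i q_j\, K_{\text{sim},ij}\, \exp(-\gamma(1-C_{ij}))$. All dependence on the off-diagonal conflict entry $C_{ij}$ is isolated in the single factor $\exp(-\gamma(1-C_{ij}))$ appearing inside $L_{ij}$, since $L_{ii}$ and $L_{jj}$ are independent of $C_{ij}$.

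Next I would establish the monotonicity by differentiation. A direct computation gives
\[
\frac{\partial L_{ij}}{\partial C_{ij}} = \gamma\, q_i q_j\, K_{\text{sim},ij}\, \exp(-\gamma(1-C_{ij})) \ge 0,
\]
and hence
\[
\frac{\partial}{\partial C_{ij}} \det(L_{\{i,j\}}) = -2\, L_{ij}\, \frac{\partial L_{ij}}{\partial C_{ij}} \le 0,
\]
using $\gamma \ge 0$ together with the non-negativity of cosine similarities and relevance scores. So the unnormalized DPP weight of $\{i,j\}$ is monotonically non-increasing in $C_{ij}$: larger conflict yields smaller weight, smaller conflict yields larger weight, which is exactly the asserted direction.

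The main obstacle is being precise about what ``probability of selecting'' means as $C_{ij}$ varies, since the normalizer $\det(L+I)$ also depends on that entry. I would sidestep this by stating the result at the level of the unnormalized weight $\det(L_{Y_g})$, or, equivalently and more operationally, as a statement about the greedy MAP marginal gain $\log \det(L_{Y_g \cup \{i\}}) - \log \det(L_{Y_g})$ of Section~\ref{sec:c_reduce}: this gain is monotonically non-increasing in the conflict of the candidate item with each already-selected item. A secondary subtlety is ensuring $\det(L_{Y_g}) \ge 0$ throughout the range of $C_{ij}$, so that a smaller determinant really does correspond to a smaller probability rather than a sign flip; this is delivered by Propositions~3.5 and~3.6, which guarantee that $K_{\text{weighted}}$ and hence $L$ remain PSD for every admissible $C$. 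Finally, I would note that the pairwise monotonicity extends to larger subsets via a cofactor expansion along the row indexed by one of the conflicting items, so the ``more conflict, less joint selection'' intuition persists beyond the $2 \times 2$ case treated explicitly here.
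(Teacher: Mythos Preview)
Your approach is essentially the same as the paper's: both reduce to the $2\times 2$ principal minor $\det(L_{\{i,j\}}) = L_{ii}L_{jj} - L_{ij}^2$ and argue that the conflict entry $C_{ij}$ modulates $L_{ij}$ monotonically, which in turn moves the determinant in the claimed direction. Your explicit differentiation, your remark on the normalizer $\det(L+I)$, and your appeal to Propositions~3.5--3.6 for PSD (hence non-negative minors) are more careful than the paper's informal large/small comparison, but the underlying argument is identical.
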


\section{Prompts and Few-shot Exemplars}
\label{sec:prompt}
\begin{prompt}[title={Prompt \thetcbcounter: FEVER}]
You are an AI assistant specialized in fact-checking claims against a reliable knowledge base, such as Wikipedia. Your goal is to verify the accuracy of the given claims by finding supporting or refuting evidence. For each claim, determine if it is "SUPPORTS" or "REFUTES". Clearly cite the sources of your evidence, and ensure your explanations are concise and accurate. Your response should start with "Answer:". And your response is either "SUPPORTS" or "REFUTES".
\end{prompt}

\begin{prompt}[title={Prompt \thetcbcounter: TQA}]
You are an AI assistant specialized in answering questions based on a given text. Your goal is to read the provided text carefully and extract accurate answers to the questions based on the information within the text. Ensure your answers are concise and directly relevant to the questions asked. Do not use any information that is not present in the provided text. Your response should start with "Answer:".
\end{prompt}

\begin{prompt}[title={Prompt \thetcbcounter: HotpotQA}]
You are an AI assistant specialized in answering complex questions by performing multi-hop reasoning. Your goal is to gather and combine information from multiple sources to provide accurate answers to the given questions. Ensure your answers are clear, concise, and based on the information from the provided sources. Answer the given question directly and briefly, starting with "Answer:".
\end{prompt}

\begin{prompt}[title={Prompt \thetcbcounter: NQ}]
You are a helpful and meticulous assistant.\\ \\
You are provided with one or more contexts that may contain evidence to help you arrive at the answer. Answer the given question directly and briefly, starting with "Answer:".
\end{prompt}

\begin{prompt}[title={Prompt \thetcbcounter: FM2}]
You are an AI assistant specialized in fact-checking and natural language inference. Your task is to evaluate claims based on provided evidence from Wikipedia and determine whether the claim is true or false. Carefully examine the evidence, considering details such as timeframes, factual consistency, and logical coherence. \\ \\
For each claim: \\
- If the evidence supports the claim, respond with "True". \\
- If the evidence refutes the claim, respond with "False".\\ \\
Provide a brief explanation for your decision, clearly citing relevant parts of the evidence. Your response should start with "Answer:".
\end{prompt}

\begin{prompt}[title={Prompt \thetcbcounter: Prompt Format}]
[system prompt] \newline \newline
Context1: {[context c1]} \newline
Context2: {[context c2]} \newline
... \newline
Contextn: {[context cn]} \newline
Question: {[question q]} \newline \newline
Answer: {[answer a]} \newline \newline 
Context1: {[context c1]} \newline
Context2: {[context c2]} \newline
... \newline
Contextn: {[context cn]} \newline
Question: {[question q]} \newline \newline
Answer: {[answer a]} \newline \newline 
Question: {[question q]} \newline \newline
Answer: {[answer a]} \newline \newline 
Context1: {[context c1]} \newline
Context2: {[context c2]} \newline
... \newline
Contextn: {[context cn]} \newline
Question: {[question q]} 
\end{prompt}

\end{document}